\setlist{leftmargin=*,noitemsep}
\def\rel{\mathrm{rel}}
\title{Individually Fair Ranking}
\author{%
  Amanda Bower\\
  Department of Mathematics\\
  University of Michigan\\
  \texttt{amandarg@umich.edu} \\
  \And
  Hamid Eftekhari\\
  Department of Statistics\\
  University of Michigan\\
  \texttt{hamidef@umich.edu} \\
  \AND
  Mikhail Yurochkin\\
  IBM Research\\
  MIT-IBM Watson AI Lab\\
  \texttt{mikhail.yurochkin@ibm.com} \\  
  \And
  Yuekai Sun\\
  Department of Statistics\\
  University of Michigan\\
  \texttt{yuekai@umich.edu} \\
}
\begin{document}

\maketitle

\begin{abstract}
We develop an algorithm to train individually fair learning-to-rank (LTR) models. The proposed approach ensures items from minority groups appear alongside similar items from majority groups. This notion of fair ranking is based on the definition of individual fairness from supervised learning and is more nuanced than prior fair LTR approaches that simply ensure the ranking model provides underrepresented items with a basic level of exposure. The crux of our method is an optimal transport-based regularizer that enforces individual fairness and an efficient algorithm for optimizing the regularizer. We show that our approach leads to certifiably individually fair LTR models and demonstrate the efficacy of our method on ranking tasks subject to demographic biases.
\end{abstract}

\section{Introduction}

Information retrieval (IR) systems are everywhere in today's digital world, and ranking models are integral parts of many IR systems. In light of their ubiquity, issues of algorithmic bias and unfairness in ranking models have come to the fore of the public's attention. In many applications, the items to be ranked are individuals, so algorithmic biases in the output of ranking models directly affect people's lives. For example, gender bias in job search engines directly affect the career success of job applicants \citep{dastin2018Amazon}.

There is a rapidly growing literature on detecting and mitigating algorithmic bias in machine learning (ML). The ML community has developed many formal definitions of algorithmic fairness along with algorithms to enforce these definitions \citep{dwork2011Fairness,hardt2016Equality,berk2017Fairness,kusner2018Counterfactual,ritov2017conditional,yurochkin2020Training}. Unfortunately, these issues have received less attention in the IR community. In particular, compared to the myriad of mathematical definitions of algorithmic fairness in the ML community, there are only a few definitions of algorithmic fairness for ranking. A recent review of fair ranking \citep{castillo2019Fairness} identifies two characteristics of fair rankings: 
\begin{enumerate}
\item sufficient exposure of items from disadvantaged groups in rankings: Rankings should display a diversity of items. In particular, rankings should take care to display items from disadvantaged groups to avoid allocative harms to items from such groups.
\item consistent treatment of similar items in rankings: Items with similar relevant attributes should be ranked similarly. 
\end{enumerate}

There is a line of work on fair ranking by \citet{singh2018Fairness,singh2019Policy} that focuses on the first characteristic. In this paper, we complement this line of work by focusing on the second characteristic. In particular, we (i) specialize the notion of individual fairness in ML to rankings and (ii) devise an efficient algorithm for enforcing this notion in practice. We focus on the second characteristic since, in some sense, consistent treatment of similar items implies adequate exposure: if there are items from disadvantaged groups that are similar to relevant items from advantaged groups, then a ranking model that treats similar items consistently will provide adequate exposure to the items from disadvantaged groups. 

\subsection{Related work}
Our work addresses the fairness of a learning-to-rank (LTR) system with respect to the items being ranked. The majority of work in this area requires a fair ranking to fairly allocate exposure (measured by the rank of an item in a ranking) to items. One line of work \citep{StoyanovichAndYang, zehlike2017fa, DBLP:journals/corr/CelisSV17, geyik2019fairness, 10.1145/3351095.3372858, 0d8212cdb710458e9a32312c0ffe3a11} requires a fair ranking to place a minimum number of minority group items in the top $k$ ranks. Another line of work models the exposure items receive based on rank position and allocates exposure based on these exposure models and item relevance \citep{singh2018Fairness, DBLP:journals/corr/abs-1805-08716, biega2018equity, singh2019Policy, sapiezynski-2019-fairness}. There is some work that consider other fairness notions. The work of \cite{10.1145/3308558.3313443} proposes error-based fairness criteria, and the framework of \cite{10.1145/3299869.3300079} can handle arbitrary fairness constraints given by an oracle. 
In contrast, we propose a fundamentally new definition: an individually fair ranking is invariant to sensitive perturbations of the features of the items. For example, consider ranking a set of job candidates, and consider the hypothetical set of candidates obtained from the original set by flipping each candidate's gender. We require that a fair LTR model produces the same ranking for both the original and hypothetical set. 

The work in \cite{zehlike2017fa, DBLP:journals/corr/CelisSV17, singh2018Fairness, biega2018equity, geyik2019fairness, 10.1145/3351095.3372858, 0d8212cdb710458e9a32312c0ffe3a11, 10.1145/3219819.3220087, 10.1145/3299869.3300079} propose post-processing algorithms to obtain a fair ranking, i.e., algorithms that fairly re-rank items based on estimated relevance scores or rankings from potentially biased LTR models. However, post-processing techniques are insufficient since they can be mislead by biased estimated relevance scores \citep{DBLP:journals/corr/abs-1805-08716, singh2019Policy} with the exception of the work in \cite{10.1145/3351095.3372858} which assumes a specific bias model and provably counteracts this bias. In contrast, like \cite{DBLP:journals/corr/abs-1805-08716, singh2019Policy}, we propose an in-processing algorithm. We also note that there is some work on

We consider individual fairness as opposed to group fairness \citep{StoyanovichAndYang, zehlike2017fa, DBLP:journals/corr/CelisSV17, singh2018Fairness, DBLP:journals/corr/abs-1805-08716, geyik2019fairness, sapiezynski-2019-fairness, 10.1145/3308558.3313443, 10.1145/3351095.3372858, 0d8212cdb710458e9a32312c0ffe3a11, 10.1145/3219819.3220087, 10.1145/3299869.3300079}. The merits of individual fairness over group fairness have been well established, e.g., group fair models can be blatantly unfair to individuals \citep{dwork2011Fairness}. In fact, we show empirically that individual fairness is adequate for group fairness but not vice versa. The work in \cite{biega2018equity, singh2019Policy} also considers individually fair LTR models. However, our notion of individual fairness is fundamentally different since we utilize a fair metric on queries like in the seminal work that introduced individual fairness \citep{dwork2011Fairness} instead of measuring the similarity of items through relevance alone. To see the benefit of our approach, consider the job applicant example. If the training data does not contain highly ranked minority candidates, then at test time our LTR model will be able to correctly rank a minority candidate who should be highly ranked, which is not necessarily true for the work in \cite{biega2018equity, singh2019Policy}. 

\section{Problem formulation}
A query $q \in \mathcal{Q}$ to a ranker consists of a candidate set of $n$ items that needs to be ranked $d^q \triangleq \{d_1^q,\dots,d_n^q\}$ and a set of relevance scores $\rel^q \triangleq \{\rel^q(d) \in \mathbb{R} \}_{d\in d^q}$. 
Each item is represented by a feature vector $\varphi(d) \in \cX$ that describes the match between item $d$ and query $q$ where $\cX$ is the feature space of the item representations. We consider stochastic ranking policies $\pi(\cdot\mid q)$ that are distributions over rankings $r$ (\ie\ permutations) of the candidate set. Our notation for rankings is $r(d)$: the rank of item $d$ in ranking $r$ (and $r^{-1}(j)$ is the $j$-ranked item). A policy generally consists of two components: a scoring model and a sampling method. The scoring model is a smooth ML model $h_{\theta}$ parameterized by $\theta$ (\eg a neural network) that outputs a vector of scores: $h_\theta(\varphi(d^q)) \triangleq (h_\theta(\varphi(d_1^q)),\dots,h_\theta(\varphi(d_n^q)))$. The sampling method defines a distribution on rankings of the candidate set from the scores. For example, the Plackett-Luce \citep{plackett1975analysis} model defines the probability of the ranking $r = \langle d_1,\dots,d_n\rangle$ as 
\begin{equation}
\pi_\theta(r\mid q) = \prod_{j=1}^n\frac{\exp(h_\theta(\varphi(d_j)))}{\exp(h_\theta(\varphi(d_j))) + \dots + \exp(h_\theta(\varphi(d_n)))}.
\label{eq:Plackett-Luce}
\end{equation}
To sample a ranking from the Placket-Luce model, items from a query are chosen without replacement where the probability of selecting items is given by the softmax of the scores of remaining items. The order in which the items are sampled defines the order of the ranking from best to worst. The goal of the LTR problem is finding a policy that has maximum expected utility:
\begin{equation}
\begin{aligned}
\pi^* \triangleq \argmax_\pi\Ex_{q\sim Q}\big[U(\pi\mid q)\big] \mathrm{ \ where \ }  \, 
U(\pi\mid q) \triangleq \Ex_{r\sim\pi(\cdot\mid q)}\big[\Delta(r,\rel^q)\big],
\end{aligned}
\label{eq:utility-maximization}
\end{equation}
where $Q$ is the distribution of queries, $U(\pi\mid q)$ is the utility of a policy $\pi$ for query $q$, and $\Delta$ is a ranking metric (\eg\ normalized discounted cumulative gain). 
In practice, we solve the empirical version of \eqref{eq:utility-maximization}:
\begin{align}
\widehat{\pi} \triangleq \argmax_\pi\frac1N\sum_{i=1}^N\big[U(\pi\mid q_i)\big],
\end{align}
where $\{q_i\}_{i=1}^N$ is a training set. If the policy is parameterized by $\theta$, it is not hard to evaluate the gradient of the utility with respect to $\theta$ with the log-derivative trick:
\[
\begin{aligned}
\partial_\theta U(\pi_\theta\mid q) &= \partial_\theta\Ex_{r\sim\pi_{\theta}(\cdot\mid q)}\big[\Delta(r,\rel^q)\big] = \int\Delta(r,\rel^q)\partial_\theta\pi_\theta(r\mid q)dr \\
&= \int\Delta(r,\rel^q)\partial_\theta\{\log\pi_\theta(r\mid q)\}\pi_\theta(r\mid q)dr = \Ex_{r\sim\pi_{\theta}(\cdot\mid q)}\big[\Delta(r,\rel^q)\partial_\theta\log\pi_\theta(r\mid q)\big].
\end{aligned}
\]
In practice, we (approximately) evaluate $\partial_\theta U(\pi_\theta\mid q)$ by sampling from $\pi_\theta(\cdot\mid q)$. This set-up is mostly adopted from \citet{yadav2019Fair}. 

\subsection{Fair Ranking via Invariance Regularization}

We cast the fair ranking problem as training ranking policies that are invariant under certain sensitive perturbations to the queries. Let $d_\cQ$ be a fair metric on queries that encode which queries should be treated similarly by the LTR model. For example, a LTR model should similarly rank a set of job candidates and the hypothetical set of job candidates obtained from the original set via flipping the gender of each candidate. Hence, these two queries should be close according to $d_\cQ.$ We propose Sensitive Set Transport Invariant Ranking (SenSTIR) to enforce individual fairness in ranking via the following optimization problem:
\begin{align}
\pi^* \triangleq \argmax_\pi\Ex_{q\sim Q}\big[U(\pi\mid q)\big] - \rho R(\pi),
\label{eq:sensei}
\tag{SenSTIR}
\end{align}
such that $\rho > 0$ is a regularization parameter and
\begin{align}
R(\pi) \triangleq \left\{\,\begin{aligned}
& \sup\nolimits_{\Pi\in\Delta(\cQ\times\cQ)} & &\textstyle \Ex_{(q, q') \sim \Pi}\big[d_{\mathcal{R}}(\pi(\cdot \mid q), \pi(\cdot \mid q')) \big] \\
& \st      & & \Ex_{(q, q') \sim \Pi}\big[d_{\cQ}(q,q')\big] \le \eps \\
&          & & \Pi(\cdot,\cQ) = Q
\end{aligned}\,\right\}
\end{align}
is an invariance regularizer where 
$d_{\mathcal{R}}$ is a metric on ranking policies, $\Delta(\cQ\times\cQ)$ is the set of probability distributions on $\cQ\times\cQ$ where $\cQ$ is the set of queries, and $\eps>0$. At a high-level, individual fairness requires ML models to have similar outputs for similar inputs. This property is exactly what the regularizer encourages: the LTR model is encouraged to assign similar ranking policies (with respect to $d_{\cR}$) to similar queries (with respect to $d_{\cQ}$). The problem of enforcing invariance for individual fairness has been considered in classification \citep{yurochkin2020Training,yurochkin2020SENSEI}. However, these methods are not readily applicable to the LTR setting because of two main challenges: (i) defining a fair distance $d_\cQ$ on queries, i.e., \emph{sets} of items, and (ii) ensuring the resulting optimization problem is differentiable. 

\paragraph{Optimal transport distance $d_\cQ$ between queries} We appeal to the machinery of optimal transport to define an appropriate metric $d_{\cQ}$ on queries, i.e., \emph{sets} of items. 
First, we need a fair metric on items $d_\cX$ that encodes our intuition of which items should be treated similarly.
Such a metric also appears in the traditional individual fairness definition \citep{dwork2011Fairness} for classification and regression problems. 
Learning an individually fair metric is an important problem of its own that is actively studied in the recent literature \citep{Ilvento2019Metric,wang2019Empirical,yurochkin2020Training, mukherjee2020simple}.  In the experiment section, the fair metric on items $d_\mathcal{X}$ is learned from data using existing methods. The key idea is to view queries, i.e., \emph{sets} of items, as distributions on $\cX$ so that a metric between distributions can be used. In particular, to define $d_\cQ$ from $d_\cX$, we utilize an optimal transport distance between queries with $d_{\cX}$ as the transport cost:
\begin{align}
d_{\cQ}(q,q') \triangleq \left\{\begin{aligned} & \inf\nolimits_{\Pi\in\Delta(\cX\times\cX)} & &\textstyle \int_{\cX\times\cX}d_\cX(x,x')d\Pi(x,x') \\
& \st & &\textstyle \Pi(\cdot,\cX) = \frac1n\sum_{j=1}^n\delta_{\varphi(d_j^q)} \\
& & &\textstyle \Pi(\cX,\cdot) = \frac1n\sum_{j=1}^n\delta_{\varphi(d_j^{q'})}
\end{aligned}\right.,
\end{align}
where $\Delta(\cX\times\cX)$ is the set of probability distributions on $\cX\times\cX$ where $\cX$ is the feature space of item representations and $\delta$ is the Dirac delta function.

\section{Algorithm}

In order to apply stochastic optimization to Equation \eqref{eq:sensei}, we appeal to duality. In particular, we use Theorem 2.3 of \citet{yurochkin2020SENSEI} re-written with the notation of this work:
\begin{theorem*}[Theorem 2.3 of \citet{yurochkin2020SENSEI}]
If $d_{\cR}(\pi(\cdot \mid q),\pi(\cdot \mid q')) - \lambda d_\cQ(q,q')$ is continuous in $(q,q')$ for all $\lambda$, then the invariance regularizer $R$ can be written as
\begin{align}
R(\pi) = \inf\nolimits_{\lambda \geq 0} \{ \lambda \eps + \Ex_{q \sim Q}[r_\lambda(\pi, q)]\}, \mathrm{where}\\
r_\lambda(\pi, q) \triangleq \sup\nolimits_{q' \in \cQ} \{ d_{\cR}(\pi(\cdot \mid q), \pi(\cdot \mid q')) - \lambda d_\cQ(q,q') \}.
\end{align}
\end{theorem*}

In order to compute $r_\lambda(\pi, q)$, we can use gradient ascent on $u(q' \mid \pi, q, \lambda) \triangleq d_{\cR}(\pi(\cdot \mid q), \pi(\cdot \mid q')) - \lambda d_\cQ(q,q')$. We start by computing the gradient of $d_\cQ(q,q')$ with respect to $x' \triangleq \varphi(d^{q'})$. Let $x \triangleq  \varphi(d^{q})$. Let $\Pi^\star(q,q')$ be the optimal transport plan for the problem defining $d_\cQ(q,q')$, that is
\begin{equation*}
    d_{\cQ}(q,q')=\int_{\cX\times\cX}d_\cX(x,x')d\Pi^\star(x,x'),\,
  \Pi^\star(\cdot,\cX) = \frac1n\sum_{j=1}^n\delta_{\varphi(d_j^q)}
, \, \Pi^\star(\cX,\cdot) = \frac1n\sum_{j=1}^n\delta_{\varphi(d_j^{q'})}.
\end{equation*}
The probability distribution $\Pi^\star(q,q')$ can be viewed as a coupling matrix where $\Pi^\star_{i,j} \triangleq \Pi^\star(\varphi(d_i^q), \varphi(d_j^{q'}))$. Using this notation we have
\begin{align}\label{eq:partial}
    \partial_{x'_j} d_{\cQ}(q,q') = \sum_{i=1}^n \Pi^\star_{i,j} \partial_2 d_\cX(\varphi(d_i^q), \varphi(d_j^{q'})),
\end{align}
where $\partial_2 d_\cX$ denotes the derivative of $d_\cX$ with respect to its second input. If $d_{\mathcal{R}}(\pi_\theta(\cdot \mid q), \pi_\theta(\cdot \mid q'))= \|h_\theta (\varphi(d^q)) - h_\theta(\varphi(d^{q'}))\|_2^2 / 2$, then by \eqref{eq:partial}, a single iteration of gradient ascent on $d_\cQ$ with step size $\gamma$ for $x'$ is
\begin{align}\label{r_lambda_max}
    x_{j}'^{(l+1)} = x_{j}'^{(l)} + \gamma  \left( \partial_{x_j'} h_\theta(x'^{(l)})^T (h_\theta(x'^{(l)}) - h_\theta(x)) - \lambda \sum_{i=1}^n \Pi^\star_{i,j} \partial_2 d_\cX(x_i, x_j'^{(l)}) \right).
\end{align}
In our experiments, we use this choice of $d_{\mathcal{R}}$, which has been widely used, e.g., robustness in image classification \citep{kannan2018Adversarial,yang2019Invarianceinducing} and fairness \citep{yurochkin2020SENSEI}. However, our theory and set-up do not preclude other metrics. We can now present Algorithm \ref{alg:senstir}, an alternating, stochastic algorithm, to solve \eqref{eq:sensei}.

\begin{algorithm}[H] 
    \DontPrintSemicolon
    \KwIn{Initial Parameters: $\theta_0, \lambda_0, \eps, \rho$; Step Sizes: $\gamma, \alpha_t, \eta_t >0$, Training queries: $\hat{Q}$}
    \Repeat{
        Sample mini-batch $(q_{t_i}, \rel^{q_{t_i}})_{i=1}^B$ from $\hat{Q}$\ \\
        $q_{t_i}' \leftarrow \argmax_{q'} \{ \frac{1}{2} \| h_{\theta_t}(\varphi(d^{q_{t_i}})) - h_{\theta_t}(\varphi(d^{q'}))\|_2^2 - \lambda_t d_{\cQ}(q_{t_i},q') \}$, $i \in [B]$ \tcc*[f]{Using (\ref{r_lambda_max})} \ \\
        $\lambda_{t + 1} \leftarrow \max\{ 0, \lambda_t + \alpha_t \rho (\eps - \frac{1}{B} \sum_{i=1}^B d_\cQ(q_{t_i}, q_{t_i}')) \}$ \ \\
        $\theta_{t + 1} \leftarrow \theta_t + \eta_t (\frac{1}{B} \sum_{i=1}^B \partial_\theta \{U(\pi_{\theta_t} \mid q_{t_i})\} - \rho (\partial_\theta h_{\theta_t}(q_{t_i}') - \partial_\theta h_{\theta_t}(q_{t_i}))^T (h_{\theta_t}(q_{t_i}') - h_{\theta_t}(q_{t_i}))$
    }{\textbf{until convergence}}

    \caption{SenSTIR: Sensitive Set Transport Invariant Ranking}
    \label{alg:senstir}
\end{algorithm}

\section{Theoretical Results}
In this section, we study the generalization performance of the invariance regularizer $R(h_{\theta}):= R(\pi_\theta)$, 
which is an instance of a hierarchical optimal transport problem that does not have known uniform convergence results in the literature. Furthermore, the regularizer is not a separable function of the training examples so classical proof techniques are not applicable. To state the result, suppose that $\hat{d}_\cX$ is an approximation of the fair metric $d_\cX$ between items that is learned from data. The corresponding learned metric on queries is defined by
\begin{align}
    \hat{d}_{\cQ}(q,q') \triangleq \left\{\begin{aligned} & \inf\nolimits_{\Pi\in\Delta(\cX\times\cX)} & &\textstyle \int_{\cX\times\cX} \hat{d}_\cX(x,x')d\Pi(x,x') \\
& \st & &\textstyle \Pi(\cdot,\cX) = \frac1n\sum_{j=1}^n\delta_{\varphi(d_j^q)} \\
& & &\textstyle \Pi(\cX,\cdot) = \frac1n\sum_{j=1}^n\delta_{\varphi(d_j^{q'})}
\end{aligned}\right.,
\end{align}

and the empirical regularizer is defined by
\vskip -0.15in
\begin{align}
    \hat{R}(h_{\theta}) \triangleq \left\{\,\begin{aligned}
& \sup\nolimits_{\Pi\in\Delta(\cQ\times\cQ)} & &\textstyle \Ex_\Pi\big[d_{\mathcal{Y}}(h_{\theta}(\varphi(d^q)) , h_{\theta}(\varphi(d^{q'}))\big] \\
& \st      & & \Ex_\Pi\big[\hat{d}_{\cQ}(q,q')\big] \le \eps \\
&          & & \Pi(\cdot,\cQ) = \hat{Q}
\end{aligned}\,\right. ,
\end{align}
where $\hat{Q}$ is the distribution of training queries and $d_{\mathcal{Y}}$ is a metric on $\cY \triangleq \{ h_{\theta }(\varphi(d^q)) \mid q \in \mathcal{Q}\}$.

Define a class of loss functions $\mathcal{D}$ by $\mathcal{D} \triangleq \{ d_{h_\theta} : \cQ \times \cQ \rightarrow \mathbf{R}_+ \mid h_\theta \in \mathcal{H} \},$ where $d_h(q,q') \triangleq d_\cY(h(\varphi(d^q)), h(\varphi(d^{q'})))$ and $\mathcal{H}$ is the hypothesis class of scoring functions.

Let $N(\mathcal{D}, d, \eps)$ be the $\eps$-covering of the class $\mathcal{D}$ with respect to a metric $d$. The entropy integral of $\mathcal{D}$ (w.r.t. the uniform metric) measures the complexity of the class and is defined by
\begin{align}
    J(\mathcal{D}) \triangleq \int_0^\infty  \sqrt{ \log N(\mathcal{D}, \|\cdot\|_\infty, \eps)} d\eps .
\end{align}

\textbf{Assumption A1.} Bounded diameters: $\sup_{x,x'\in \cX} d_\cX (x,x') \leq D_\cX, \, \sup_{y,y' \in \cY} d_\cY(y, y') \leq D_\cY.$

\textbf{Assumption A2.} Estimation error of $d_\cX$ is bounded: $\sup_{x,x' \in \mathcal{X}} |\hat{d}_\mathcal{X}(x,x') - d_{\mathcal{X}}(x,x') | \leq \eta_d.$


\begin{theorem}\label{R_generalize}
If assumptions A1 and A2 hold and $J(\mathcal{D})$ is finite, then with probability at least $1 - t$
\begin{align*}
    \sup_{h_\theta \in \mathcal{H}} |\hat{R}(h_\theta) - R(h_\theta)| \leq \frac{48(J(\mathcal{D}) + \eps^{-1} D_\cX D_\cY) }{\sqrt{n}} + D_\cY \left(\frac{\log \frac{2}{t}}{2n}\right)^{\frac12} + \frac{D_\cY \eta_d}{\eps},
\end{align*}
\end{theorem}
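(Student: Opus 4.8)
The plan is to dissolve the hierarchical optimal‑transport structure by duality, and then run a standard empirical‑process argument on the per‑query loss class that emerges. First I would invoke Theorem~2.3 of \citet{yurochkin2020SENSEI} (its continuity hypothesis holds for the metric we use, $d_{\cR}(\pi_\theta(\cdot\mid q),\pi_\theta(\cdot\mid q')) = d_\cY(h_\theta(\varphi(d^q)),h_\theta(\varphi(d^{q'})))$) to write
\[
 R(h_\theta) = \inf_{\lambda\ge 0}\{\lambda\eps + \Ex_{q\sim Q}[r_\lambda(h_\theta,q)]\}, \qquad \hat{R}(h_\theta) = \inf_{\lambda\ge 0}\{\lambda\eps + \Ex_{q\sim\hat{Q}}[\hat{r}_\lambda(h_\theta,q)]\},
\]
where $r_\lambda(h_\theta,q) = \sup_{q'\in\cQ}\{d_{h_\theta}(q,q') - \lambda d_\cQ(q,q')\}$ and $\hat{r}_\lambda$ is the same object with $\hat{d}_\cQ$ in place of $d_\cQ$. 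This is the crucial move: the regularizer --- originally a non‑separable functional of the sample (an outer transport whose ground cost is itself a transport problem) --- becomes, for each fixed $\lambda$, an average of the per‑query function $r_\lambda(h_\theta,\cdot)$. Taking $q'=q$ gives $r_\lambda\ge 0$, while $d_{h_\theta}\le D_\cY$ and $\lambda d_\cQ\ge 0$ give $r_\lambda\le D_\cY$ (likewise for $\hat{r}_\lambda$); since the value at $\lambda=0$ is already at most $D_\cY$, the infimum is unchanged if we restrict to the compact range $\lambda\in[0,D_\cY/\eps]$, which I do from now on.

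Next I would insert the intermediate regularizer $\bar{R}(h_\theta) := \inf_{\lambda\in[0,D_\cY/\eps]}\{\lambda\eps + \Ex_{q\sim\hat{Q}}[r_\lambda(h_\theta,q)]\}$ (empirical queries, \emph{exact} metric) and use $|\hat{R}(h_\theta)-R(h_\theta)|\le|\hat{R}(h_\theta)-\bar{R}(h_\theta)|+|\bar{R}(h_\theta)-R(h_\theta)|$. For the first term, $d_\cQ$ and $\hat{d}_\cQ$ are optimal transport costs over the \emph{same} set of couplings, so Assumption~A2 gives $\sup_{q,q'}|\hat{d}_\cQ(q,q')-d_\cQ(q,q')|\le\eta_d$, hence $|\hat{r}_\lambda(h_\theta,q)-r_\lambda(h_\theta,q)|\le\lambda\eta_d\le D_\cY\eta_d/\eps$; combined with $|\inf f-\inf g|\le\sup|f-g|$ this term is bounded by the deterministic quantity $D_\cY\eta_d/\eps$, the third summand of the theorem. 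For the second term, the same $\inf$‑versus‑$\inf$ inequality bounds $\sup_{h_\theta\in\mathcal{H}}|\bar{R}(h_\theta)-R(h_\theta)|$ by
\[
 \sup_{h_\theta\in\mathcal{H}}\ \sup_{\lambda\in[0,D_\cY/\eps]}\ \bigl|(\Ex_{\hat{Q}}-\Ex_Q)[r_\lambda(h_\theta,\cdot)]\bigr|,
\]
the supremum of an empirical process indexed by the loss class $\mathcal{G} := \{q\mapsto r_\lambda(h_\theta,q): h_\theta\in\mathcal{H},\ \lambda\in[0,D_\cY/\eps]\}$, whose members take values in $[0,D_\cY]$.

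I would bound this supremum by symmetrization and Dudley's entropy integral, then upgrade to high probability. Two Lipschitz observations supply the covering numbers of $\mathcal{G}$: (i) $q'\mapsto\sup$ is a contraction, so $\|r_\lambda(h_1,\cdot)-r_\lambda(h_2,\cdot)\|_\infty\le\|d_{h_1}-d_{h_2}\|_\infty$, and a $\delta$‑net of $\mathcal{D}$ thus serves the $h_\theta$‑direction at every $\lambda$; (ii) $d_\cQ\le D_\cX$ by Assumption~A1, so $\lambda\mapsto r_\lambda$ is $D_\cX$‑Lipschitz and an $O(D_\cX D_\cY/(\eps\delta))$‑point net of $[0,D_\cY/\eps]$ handles the $\lambda$‑direction. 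Hence $\log N(\mathcal{G},\|\cdot\|_\infty,\delta)\le\log N(\mathcal{D},\|\cdot\|_\infty,\delta/2)+\log\lceil C D_\cX D_\cY/(\eps\delta)\rceil$; integrating the square root against $d\delta$, the first piece contributes $O(J(\mathcal{D}))$ (the $\delta/2$ only costs a factor $2$) and the second, after the change of variables $\delta=(D_\cX D_\cY/\eps)e^{-s}$, is at most a universal constant times $D_\cX D_\cY/\eps$. Feeding this into Dudley's bound gives expected supremum $O\bigl((J(\mathcal{D})+\eps^{-1}D_\cX D_\cY)/\sqrt{n}\bigr)$; since replacing a single training query changes $\sup_{\mathcal{G}}|(\Ex_{\hat{Q}}-\Ex_Q)(\cdot)|$ by at most $D_\cY/n$, McDiarmid's bounded‑differences inequality converts this into the claimed high‑probability statement, adding the fluctuation term $D_\cY(\log(2/t)/(2n))^{1/2}$. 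Summing the three contributions proves the bound.

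The step I expect to be the real obstacle is the complexity control of $\mathcal{G}$: it is indexed jointly by the hypothesis $h_\theta$ and the dual multiplier $\lambda$, and its members involve an inner supremum over the entire infinite space of ``hypothetical'' queries $q'$, so it is not a priori clear that $\mathcal{G}$ is no more complex than $\mathcal{D}$. The contraction property of $q'\mapsto\sup$ is exactly what collapses it to $\mathcal{D}$, and keeping the $\lambda$‑range bounded by $D_\cY/\eps$ rather than all of $[0,\infty)$ is what keeps the extra entropy‑integral piece finite and of the claimed order $\eps^{-1}D_\cX D_\cY$. Getting this localization right, and checking that $r_\lambda$ and $\hat{r}_\lambda$ genuinely inherit the boundedness and Lipschitz properties the argument needs, is where essentially all the care goes; the rest is bookkeeping.
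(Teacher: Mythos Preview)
Your proposal is correct and follows essentially the same route as the paper: pass to the dual form, bound the dual multiplier by $D_\cY/\eps$ using $r_\lambda\ge 0$ and $R(h)\le D_\cY$, control the metric‑estimation error via $|\hat d_\cQ-d_\cQ|\le\eta_d$, and reduce the remainder to a uniform empirical‑process bound over the class $\{r_\lambda(h,\cdot):h\in\mathcal{H},\ \lambda\in[0,D_\cY/\eps]\}$. The only cosmetic differences are that you insert an intermediate regularizer $\bar R$ and use $|\inf f-\inf g|\le\sup|f-g|$, whereas the paper evaluates at the optimizers $\lambda^\star$ and $\hat\lambda^\star$ to handle the two directions separately; and you spell out the Dudley/McDiarmid step (including the Lipschitz‑in‑$h$ contraction and the $D_\cX$‑Lipschitz‑in‑$\lambda$ covering) that the paper compresses into the phrase ``a standard concentration argument.''
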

where $n$ is the number of training queries. 
A proof of the theorem is given in the appendix. The key technical challenge is leveraging the transport geometry on the query space to obtain a uniform bound on the convergence rate. This theorem implies that for a trained ranking model $\hat{h}_\theta$, the error term $|\hat{R}(\hat{h}_\theta) - R(\hat{h}_\theta)|$ is small for large $n$. Therefore, one can certify that the value of the regularizer $R(\hat{h}_\theta)$ is small on yet unseen (test) data by ensuring that the value of $\hat{R}(\hat{h}_\theta)$ is small on training data. 

\section{Computational results}
In this section, we demonstrate the efficacy of SenSTIR for learning individually fair LTR models. 
One key conclusion is that enforcing individual fairness is adequate to achieve group fairness but not vice versa. See Section B of the appendix for full details about the experiments.


\textbf{Fair metric} Following \cite{yurochkin2020Training}, the individually fair metric $d_\cX$ on $\mathcal{X}$ is defined in terms of a \textit{sensitive subspace} $A$ that is learned from data. In particular, $d_\cX$ is the Euclidean distance of the data projected onto the orthogonal complement of $A$.
This metric encodes variation due to sensitive information about individuals in the subspace and ignores it when computing the fair distance. 
For example, $A$ can be formed by fitting linear classifiers to predict sensitive information, like gender or age, of individuals and taking the span of the vectors orthogonal to the corresponding decision boundaries. In each experiment, we explain how $A$ is learned. 

\textbf{Baselines} 
For all methods, we learn linear score functions $h_\theta$ and maximize normalized discounted cumulative gain (NDCG), i.e., $\Delta$ in Equation \ref{eq:utility-maximization} is NDCG. We compare SenSTIR to (1) vanilla training without fairness (``Baseline"), i.e., $\rho = 0$, (2) pre-processing by first projecting the data onto the orthogonal complement of the sensitive subspace and then using vanilla training (``Project"), (3) ``Fair-PG-Rank" \citep{singh2019Policy}, a recent approach for training fair LTR models, and (4) randomly sampling the linear weights from a standard normal (``Random") to give context to NDCG. 


\subsection{Synthetic}
We use synthetic data considered in prior fair ranking work \citep{singh2019Policy}. Each query contains 10 majority or minority items in $\mathbb{R}^2$ such that 8 items per query are majority group items in expectation. For each item, $z_1$ and $z_2$ are drawn uniformly from $[0,3]$. The relevance of an item is $z_1 + z_2$ clipped between 0 and 5. A majority item's feature vector is $(z_1,z_2)^T$, whereas a minority item's feature vector is corrupted and given by $(z_1,0)^T$.  

\textbf{Fair Metric} The sensitive subspace is spanned by the hyperplane learned by logistic regression to predict whether an item is in the majority group. Recall, the fair metric is the Euclidean distance of the projection of the data onto the orthogonal complement of this subspace. Since this hyperplane is nearly equal to $(0,1)^T$, the biased feature $z_2$ is ignored in the fair metric.

\textbf{Results} Figure \ref{fig:synthetic} illustrates SenSTIR for $\rho \in \{0, .0003, .001\}$ with $\eps = .001$. Each point is colored by its relevance, and the contours show predicted scores where redder (respectively bluer) regions indicates higher (respectively lower) predicted scores. Minority items are on the horizontal $z_1$-axis because of their corrupted features. When $\rho = 0$, i.e., fairness is not enforced, this score function badly violates individual fairness since there are pairs of items close in the fair metric but with wildly different predicted scores because the biased feature $z_2$ is used. For example, the bottom blue star is a minority item with nearly the same relevance as the top black star majority item; however, the majority item's predicted score is much higher. When $\rho$ is increased, the contours learned by SenSTIR eventually become vertical, thereby ignoring the biased feature $z_2$ and achieving individual fairness. When $\rho=.001$, the scores of the blue and black star are nearly equal because they are very close in the fair metric and the fair regularization strength is large enough. 

\begin{figure}
\centering
\includegraphics[width=11cm]{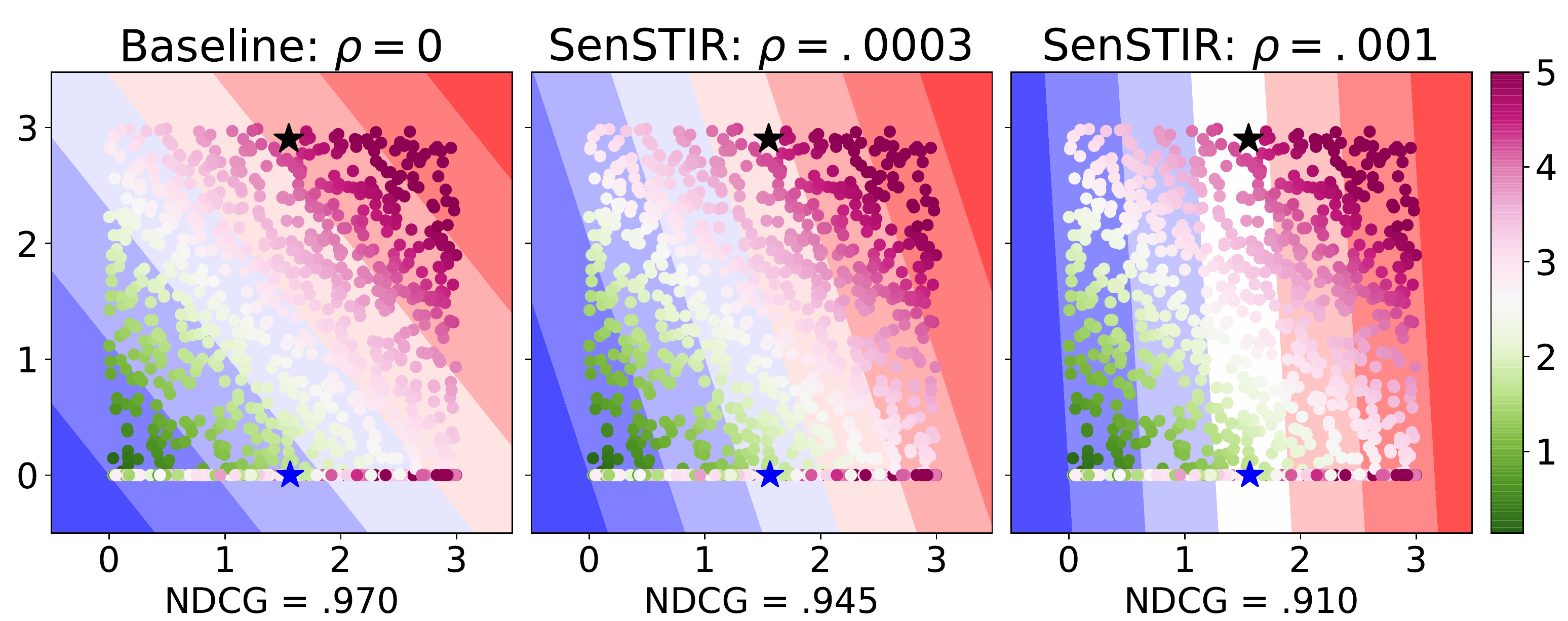}
\caption{The points represent items shaded by their relevances, and the contours represent the predicted scores. The minority items lie on the horizontal $z_1$-axis because their $z_2$ value is corrupted to 0. The blue star and black star correspond to minority and majority items that are close in the fair metric with nearly the same relevance. However, they have wildly different predicted scores under the baseline. Using SenSTIR, as $\rho$ increases, they eventually have the same predicted scores.}
\label{fig:synthetic}
\end{figure}

Figure \ref{fig:synthetic_heatmap} illustrates another individual fairness property of SenSTIR that Fair-PG-Rank does not satisfy: ranking stability with respect to sensitive perturbations of the features. For each test query $q$, let $q' \neq q$ be the closest test query in terms of the fair distance $d_{\mathcal{Q}}$. We can view $q'$ as a hypothetical query in the test set. For each query $q$, we sample 10 rankings corresponding to $q$ and 10 hypothetical rankings corresponding to $q'$ based on the learned ranking policy. The $(i,j)$-th entry of a heatmap in Figure \ref{fig:synthetic_heatmap} is the proportion of times the $i$-th ranked item for query $q$ is ranked $j$-th in the hypothetical ranking.  
To satisfy individual fairness, the original and hypothetical rankings should be similar, meaning the heatmaps should be close to diagonal. Even though the baseline is relatively stable for highly and lowly ranked items, these items still change positions under the hypothetical rankings more than 50\% of the time. Although Fair-PG-Rank satisfies group fairness,
it is worse than the baseline in terms of hypothetical stability, i.e., individual fairness. In contrast, as $\rho$ increases, SenSTIR becomes stable.

\begin{figure}
\includegraphics[width=14cm]{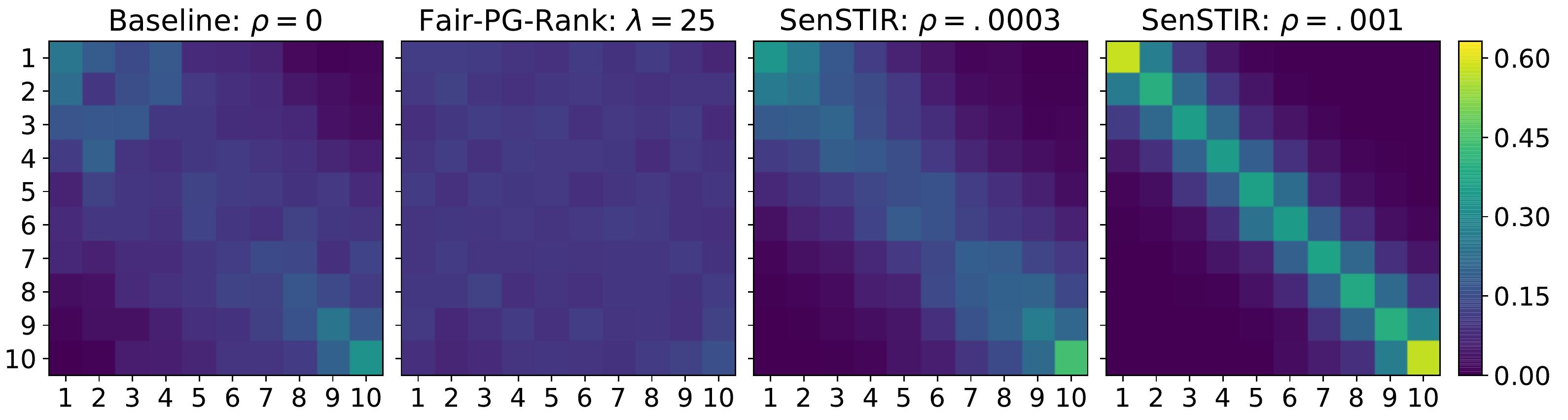}
\vskip -0.1in
\caption{The $(i,j)$-th entries of these heatmaps represent the proportion of times that the $i$-th ranked item is moved to position $j$ under the corresponding hypothetical ranking. With large enough $\rho$, SenSTIR ranks the original queries and hypothetical queries similarly as desired.}
\label{fig:synthetic_heatmap}
\vskip -0.1in
\end{figure}

\subsection{German Credit data set}

Following \cite{singh2019Policy}, we adapt the German Credit classification data set \citep{Dua:2019}, which is susceptible to gender and age biases, to a LTR task. This data set contains 1000 individuals with binary labels indicating creditworthiness. Features include demographics like gender and age as well as information about savings accounts, housing, and employment. To simulate LTR data, individuals are sampled with replacement to build queries of size 10. Each individual has a binary relevance, and on average 4 individuals are relevant in each query. To apply Fair-PG-Rank, age is the binary protected attribute where the two groups are those younger than 25 and those 25 and older, a split proposed by \cite{kamiran2009classifying}. For the fair metric, the sensitive subspace is spanned by the ridge regression coefficients for predicting age based on all other features and the standard basis vector corresponding to age. 

\textbf{Comparison metrics} See Section B of the appendix for the precise definitions of these metrics. To assess accuracy, following \cite{singh2019Policy}, we report the average stochastic test NDCG by sampling 25 rankings for each query from the learned ranking policy. To assess individual fairness, we use ranking stability with respect to demographic perturbations, which is the natural analogue of an evaluation metric for individual fairness in classification \citep{yurochkin2020SENSEI, yurochkin2020Training, garg2018counterfactual}. In particular, for each query, we create a hypothetical query by flipping the (binary) gender of each individual in the query, and deterministically rank by sorting the items by their scores. We report the average Kendall's tau correlation (higher implies better individual fairness) between a test query's ranking and its hypothetical ranking. To assess group fairness and fairly compare to Fair-PG-Rank based on their fairness definition, we report the average stochastic disparity of group exposure also with 25 sampled rankings per query. This metric measures the asymmetric differences of the ratio of exposure a group receives to its relevance per query and favors the group with less relevance for a given query. 
Let $G_1$ (respectively $G_0$) be the set of older (respectively younger) people for a query $q$. For $i \in \{0,1\}$, let $M_{G_i} = (1 / |G_i|) \sum_{d \in G_i} \mathrm{rel}^{q}(d)$. If $M_{G_0} > M_{G_1}$, let $G_A = G_0$, $G_D = G_1$ and $G_A = G_1$, $G_D = G_0$ otherwise. The stochastic disparity of group exposure for a set of rankings $\{r_i\}_{i=1}^N$ corresponding to a query is 
\begin{align}
    \max\left\{0, \frac{\frac{1}{N |G_A|} \sum_{d \in G_A} \sum_{i=1}^N \frac{1}{\log_2(r_i(d)+1)}}{M_{G_A}} - \frac{\frac{1}{N |G_D|} \sum_{d \in G_D}  \sum_{i=1}^N \frac{1}{\log_2(r_i(d)+1)}}{M_{G_D}} \right\}.
\end{align}

\textbf{Results} Figure \ref{fig:german} illustrates the fairness versus accuracy trade-off on the test set. The error bars represent the standard error over 10 random train/test splits. Both SenSTIR and Fair-PG-Rank enforce fairness through regularization, so we vary the regularization strength ($\rho$ for SenSTIR with $\eps$ constant). Based on the NDCG of ``Random", the regularization strength ranges are reasonable for both methods. The left plot in Figure \ref{fig:german} shows the average Kendall's tau correlation (higher is better) between test queries and their gender-flipped hypotheticals versus the average stochastic NDCG. The maximum Kendall's tau correlation is 1, which SenSTIR achieves with relatively high NDCG. We emphasize that the sensitive subspace that SenSTIR utilizes to define the fair query metric directly relates to age, not gender. In other words, our goal is to mitigate unfairness that arises from \textit{age} in the training data, not \textit{gender}. However, age is correlated with gender, so this metric shows the individually fair properties of SenSTIR generalize beyond age on the test set. We imagine that ML systems can be unfair to people with respect to features that can be difficult to know before deploying these systems, so although flipping gender is a simplistic choice, it illustrates that SenSTIR can be meaningfully individually fair with respect to these potentially unknown features that were not given special consideration when choosing the fair metric or in training with SenSTIR. Furthermore, SenSTIR gracefully trades off NDCG for individual fairness unlike Fair-PG-Rank. ``Project" is worse in terms of individual fairness than vanilla training without enforcing fairness. Without direct age information, perhaps ``Project" must more heavily rely on gender to learn accurate rankings, which illustrates that SenSTIR's generalization properties from age to gender are non-trivial. Disparity of group exposure (where smaller numbers are better) versus NDCG is depicted on the right plot of Figure \ref{fig:german}. This group fairness metric is exactly what Fair-PG-Rank regularizes with. On average, for the same value of NDCG, SenSTIR typically outperforms Fair-PG-Rank showing that individual fairness can be adequate for group fairness but not vice versa. While ``Project" improves mildly upon the baseline, it shows being ``age" blind does not result in group fair rankings.

\begin{figure}
\includegraphics[width=14cm]{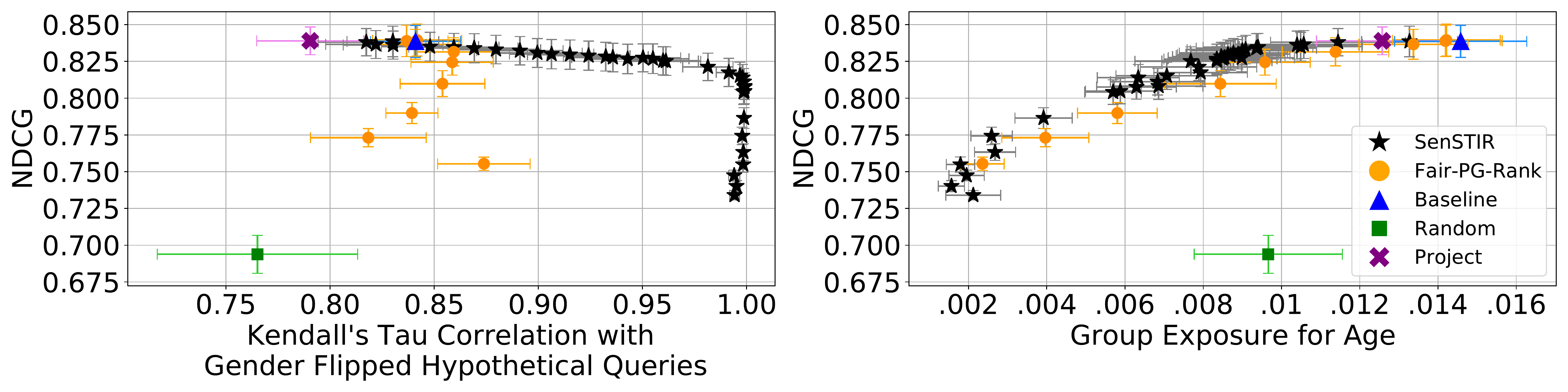}
\vskip -0.1in
\caption{Individual (left) and group fairness (right) versus accuracy for the German credit data set}
\label{fig:german}
\vskip -0.1in
\end{figure}


\subsection{Microsoft Learning To Rank data set}
The demographic biases are real in the German Credit data, but the LTR task is simulated. There are no standard LTR data sets with demographic biases, so we consider Microsoft's Learning to Rank (MSLR) data set \citep{DBLP:journals/corr/QinL13} with an artificial algorithmic fairness concern dealing with webpage quality following \cite{yadav2019Fair}. The data set consists of query-web page pairs from a search engine with nearly 140 features with integral relevance scores. 
To apply Fair-PG-Rank, following \cite{yadav2019Fair}, the protected binary attribute is whether a web page is high or low quality defined by the 40th percentile of quality scores (feature 133). For the fair metric, the sensitive subspace is spanned by the ridge regression coefficients for predicting the quality score (feature 133) based on all features and the standard basis vector corresponding to the quality score.

\textbf{Comparison metrics} Again we use average stochastic NDCG to measure accuracy, and the dispartiy of group exposure where the groups are high and low quality web pages. To assess individual fairness, we use the same set-up as in the German Credit experiments except the hypothetical for each test query $q$ is the closest query $q' \neq q$ with respect to the fair metric over the train and test set.

\textbf{Results} Figure \ref{fig:MSLR} shows the fairness and accuracy trade-off on the test set. Fair-PG-Rank becomes unstable with large fair regularization as it can drop below a random ranking in NDCG. The left plot shows the Kendall's tau correlation between test queries and their hypotheticals. SenSTIR gracefully trades-off NDCG with Kendall's tau correlation unlike Fair-PG-Rank. The right plot shows that SenSTIR also smoothly trades-off group fairness for NDCG. In contrast, as the regularization strength increases, both NDCG and group exposure worsen for Fair-PG-Rank, which was also observed by \cite{yadav2019Fair}.

\begin{figure}
\includegraphics[width=14cm]{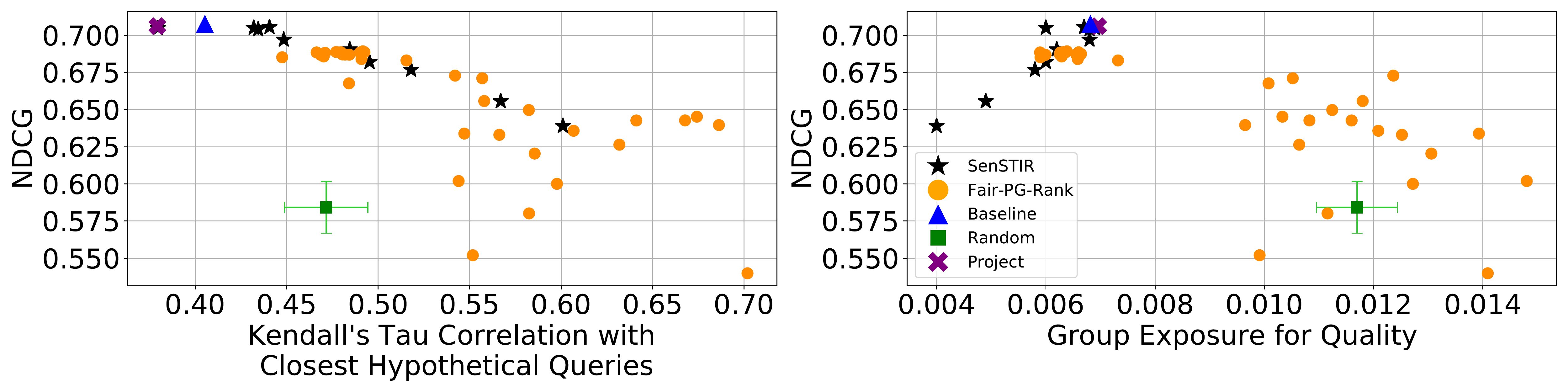}
\vskip -0.1in
\caption{Individual (left) and group fairness (right) versus NDCG for the MSLR data set}
\label{fig:MSLR}
\vskip -0.1in
\end{figure}

\section{Conclusion}
We proposed SenSTIR, an algorithm to learn provably individually fair LTR models with an optimal transport-based regularizer. This regularizer encourages the LTR model to produce similar ranking policies, i.e., distributions over rankings, for similar queries where similarity is defined by a fair metric. 
Our notion of a fair ranking is complementary to prior definitions that require allocating exposure to items fairly with respect to merit. In fact, we empirically showed that enforcing individual fairness can lead to allocating exposure fairly for groups but allocating exposure fairly for groups does not necessarily lead to individually fair LTR models. An interesting future work direction is studying the fairness of LTR systems in the context of long-term effects \citep{49323}.



\section*{Acknowledgements} This paper is based upon work supported by the National Science Foundation (NSF) under grants no. 1830247 and 1916271. Any opinions, findings, and conclusions or recommendations expressed in this paper are those of the authors and do not necessarily reflect the views of the NSF.

\bibliographystyle{iclr2021_conference}
\bibliography{YK, AB, YK_senstir}

\begin{appendix}
\section{Proofs of Theoretical Results}
\label{appendix:theory}

\begin{theorem}[Theorem \ref{R_generalize}]
If assumptions A1 and A2 hold and $J(\mathcal{D})$ is finite, then with probability at least $1 - t$
\begin{align*}
    \sup_{h \in \mathcal{H}} |\hat{R}(h) - R(h)| \leq \frac{48(J(\mathcal{D}) + \varepsilon^{-1} D_\cX D_\cY) }{\sqrt{n}} + D_\cY \left(\frac{\log \frac{2}{t}}{2n}\right)^{\frac12} + \frac{D_\cY \eta_d}{\varepsilon}.
\end{align*}
\end{theorem}

\begin{proof}
For queries $q,q'$ let 
\begin{align*}
\Delta(q,q') = \{ \Pi \in \Delta(\cX \times \cX) : \Pi(\cX, \cdot) = \frac1n\sum_{j=1}^n\delta_{\varphi(d_j^q)},\, \Pi(\cdot, \cX) = \frac1n\sum_{j=1}^n\delta_{\varphi(d_j^{q'})} \}.
\end{align*}
Let $\Pi^* \in \argmin_{\Pi \in \Delta(q,q')} \Ex_\Pi[d_\cX (X,X')]$ and observe that by assumption A2 and the definition of $d_Q$ and $\hat{d}_Q$ we have
\begin{align*}
    \hat{d}_\cQ(q,q') - d_\cQ(q,q') &= \inf_{\Pi \in \Delta(q,q')} \Ex_\Pi[\hat{d}_\cX (X,X')] - \inf_{\Pi \in \Delta(q,q')} \Ex_\Pi[d_\cX(X,X')] \\
    &= \inf_{\Pi \in \Delta(q,q')} \Ex_\Pi[\hat{d}_\cX (X,X')] - \Ex_{\Pi^*}[d_\cX(X,X')]\\
    &\leq \Ex_{\Pi^*}[\hat{d}_\cX(X,X')] - \Ex_{\Pi^*}[d_\cX(X,X')]\\
    &= \Ex_{\Pi^*}[\hat{d}_\cX(X,X') - d_\cX(X,X')]\\
    &\leq \eta_d.
\end{align*}
Similarly, 
\begin{align*}
d_Q(q,q') - \hat{d}_Q(q,q') \leq \Ex_{\hat{\Pi}^*}[d_\cX(X,X') - \hat{d}_\cX(X,X') ] \leq \eta_d.
\end{align*}
It follows that 
\begin{align}\label{d_Q_ineq}
    |\hat{d}_Q(q,q') - d_Q(q,q')| \leq \eta_d.
\end{align}

Next, we will bound the difference $|\hat{R}(h) - R(h)|$. To lighten the notation, we write $h,h'$ for $h=h(\phi(d^q)), h' = h(\phi(d^{q'}))$. From the dual representation of $R(h)$ and $\hat{R}(h)$ we have
\begin{align}
    \hat{R}(h) - R(h) &= \inf_{\lambda\geq 0} \{ \lambda \epsilon + \Ex_{q \sim \hat{Q}} [\hat{r}_\lambda(h,q)]\}  - \inf_{\lambda \geq 0} \{ \lambda \epsilon + \Ex_{q \sim Q} [r_\lambda(h,q)] \}\\
    &= \inf_{\lambda\geq 0} \{ \lambda \epsilon + \Ex_{q \sim \hat{Q}} [\hat{r}_\lambda(h,q)]\} - \lambda^* \epsilon - \Ex_{q \sim \hat{Q}} [\hat{r}_{\lambda^*}(h,q)]\\
    &\leq \Ex_{q \sim \hat{Q}} [\hat{r}_{\lambda^*}(h,q)] -  \Ex_{q \sim Q} [r_{\lambda^*}(h,q)]\\
    &= \Ex_{q \sim \hat{Q}} [r_{\lambda^*}(h,q)] - \Ex_{q \sim Q} [r_{\lambda^*}(h,q)] + \Ex_{q \sim \hat{Q}} [\hat{r}_{\lambda^*}(h,q) - r_{\lambda^*}(h,q)]. \label{R_ineq}
\end{align}

To bound the last term, note that
\begin{align}
    |\hat{r}_{\lambda^*}(h,q) - r_{\lambda^*}(h,q)| &= \sup_{q'} \{ \hat{d}_{\cY}(h,h') - \lambda^* d_{\cQ}(q,q')\} - \sup_{q'} \{ \hat{d}_{\cY}(h,h') - \lambda^* d_{\cQ}(q,q')\}\\
    &\leq \lambda^* \sup_{q'} \{ |d_\cQ(q,q') - \hat{d}_\cQ(q,q')|\\
    &\leq \lambda^* \eta_d. \label{r_ineq}
\end{align}
Combining (\ref{r_ineq}) and (\ref{R_ineq}) yields
\begin{align}\label{R_hat-R}
    \hat{R}(h) - R(h) \leq \Ex_{q \sim \hat{Q}} [r_{\lambda^*}(h,q)] - \Ex_{q \sim Q} [r_{\lambda^*}(h,q)] + \lambda^* \eta_d.
\end{align}
Using a similar argument, 
\begin{align}\label{R-R_hat}
    R(h) - \hat{R}(h) \leq \Ex_{q \sim Q} [r_{\hat{\lambda}^*}(h,q)] - \Ex_{q \sim \hat{Q}} [r_{\hat{\lambda}^*}(h,q)] + \hat{\lambda}^* \eta_d.
\end{align}


To find an upper bound on $\lambda^*$, observe that $r_\lambda(h,q) \geq 0$ for all $h\in \mathcal{H}, \lambda \geq 0$, as 
\begin{align*}
    r_\lambda(h,q) &= \sup_{q' \in \cX} \{ d_\cY(h,h') - \lambda d_\cQ(q,q') \}\\
    &\geq d_\cY(h,h) - \lambda d_\cQ(q,q) = 0.
\end{align*}
Thus 
\begin{align*}
    \lambda^* \varepsilon \leq \lambda^\star \varepsilon + \Ex_{q \sim \cQ}[r_\lambda(h,q)] = R(h) \leq D_\cY.
\end{align*}
Rearranging the above yields $\lambda^* \leq \frac{D_\cY}{\varepsilon}$ and the same upper bound is also valid for $\hat{\lambda}^\star$ by the same argument.

Combining inequalities (\ref{R_hat-R},\ref{R-R_hat}) and the bound on $\lambda^*, \hat{\lambda}^*$, we can write
\begin{align*}
    |\hat{R}(h) - R(h)| \leq \sup_{f \in \mathcal{F}} \left| \Ex_{q\sim \hat{Q}} f(q) - \Ex_{q \sim Q} f(q) \right| + \frac{D_\cY \eta_d}{\varepsilon},
\end{align*}
where $\mathcal{F} = \{ r_\lambda(h,\cdot): \lambda \in [0, L],\, h \in \mathcal{H} \}$. A standard concentration argument proves
\begin{align*}
  \sup_{f \in \mathcal{F}} \left| \Ex_{q\sim \hat{Q}} f(q) - \Ex_{q \sim Q} f(q) \right|  \leq \frac{48(J(\mathcal{D}) + \varepsilon^{-1} D_\cX D_\cY }{\sqrt{n}} + D_\cY (\frac{\log \frac{2}{t}}{2n})^{\frac12}
\end{align*}
with probability at least $1 - t$. This completes the proof of the theorem.
\end{proof}

The main technical novelty in this proof is the bound on $\lambda_*$ in terms of the diameter of the output space. This restricts the set of possible $c$-transformed loss function class, thereby allowing us to appeal to standard techniques from empirical process theory to obtain uniform convergence results. Prior work in this area (\eg\ \citet{lee2017Minimax}) relies on smoothness properties of the loss instead of the geometric properties of the output space, but this precludes non-smooth output metrics.

\section{Experiments}
All experiments were ran a cluster of CPUS. We do not require a GPU.

\label{appendix:experiments}

\subsection{Data sets and pre-processing}
\paragraph{Synthetic} Synthetic data is generated as described in the main text such that there are 100 queries in the training set and 100 queries in the test set.

\paragraph{German Credit}
The German Credit data set \citep{Dua:2019} consists of 1000 individuals with binary labels indicating if they are credit worthy or not. We use the version of the German Credit data set that \cite{singh2019Policy} used found at \url{https://www.kaggle.com/uciml/german-credit}. In particular, this version of the Geramn Credit data set only uses the following features: \texttt{age} (integer), \texttt{sex} (binary, does not include any marital status information unlike the original data set), \texttt{job} (categorical), \texttt{housing} (categorical), \texttt{savings account} (categorical), \texttt{checking account} (integer), \texttt{credit amount} (integer), \texttt{duration} (integer), and \texttt{purpose} (categorical). See \cite{Dua:2019} for an explanation of each feature.

Categorical features are the only features with missing data, so we treat missing data as its own category. The following features are standardized by subtracting the mean and dividing by the standard deviation (before this data is turned into LTR data): \texttt{age}, \texttt{duration}, and \texttt{credit amount}. The remaining binary and categorical features are one hot encoded. 

We use an 80/20 train/test split of the original 1000 data points, and then sample from the training/testing set with replacement to build the LTR data as discussed in the main text. For our experiments, we use 10 random train/test splits.

\paragraph{Microsoft Learning to Rank}

The Microsoft Learning to Rank data set \citep{DBLP:journals/corr/QinL13} consists of query-web page pairs each of which has 136 features and integral relevance scores in $[0,4]$. We use Fold 1's train/validation/test split. Following \cite{yadav2019Fair}, we use the data in Fold 1 and adopt the given train/validation/test split. The data and feature descriptions can be found at \url{https://www.microsoft.com/en-us/research/project/mslr/}. We remove the \texttt{QualityScore} feature (feature 132) since we use the \texttt{QualityScore2} (feature 133) feature to learn the fair metric, and it appears based on the description of these features, they are very similar. We standardize the remaining features (except for the features corresponding to \texttt{Boolean model}, i.e. features 96-100, which are binary) by subtracting the mean and dividing by the standard deviation. Following \cite{yadav2019Fair}, we remove any queries with less than 20 web pages. Furthermore, we only consider queries that have at least one web page with a relevance of 4.  For each query, we sample 20 web pages without replacement until at least one of the 20 sampled web pages has a relevance of 4. After pre-processing, there are 33,060 train queries, 11,600 validation queries, and 11,200 test queries.

\subsection{Comparison Metrics}

Let $r$ be a ranking (i.e. permutation) of a set of $n$ items that are enumerated such that $r(i) \in [n]$ is the position of the $i$-th item in the ranking and $r^{-1}(i) \in [n]$ is the item that is ranked $i$-th. Let $\mathrm{rel}_q(i)$ be the relevance of item $i$ given a query $q$.

\paragraph{Normalized Discounted Cumulative Gain (NDCG)} Let $S_n$ be the set of all rankings on $n$ items. The discounted cumulative gain (DCG) of a ranking $r$ is \[\mathrm{DCG}(r) = \sum_{i=1}^n \frac{2^{\mathrm{rel}_q(r^{-1}(i))} -1}{\log_2(i+1)}.\] The NDCG of a ranking $r$ is \[\frac{\mathrm{DCG}(r)}{\max_{r' \in S_n} \mathrm{DCG}(r')}.\] 

Because we learn a distribution over rankings and the number of rankings is too large, we cannot compute the expected value of the NDCG for a given query. Thus, for each query in the test set, we sample $N$ rankings (where $N = 10$ for synthetic data, $N = 25$ for German credit data, and $N = 32$ for Microsoft Learning to Rank data) from the Placket-Luce distribution, compute the NDCG for each of these rankings, and then take an average. We refer to this quantity as the \textit{stochastic NDCG}.

\paragraph{Kendall's tau correlation} Let $r$ and $r'$ be two rankings on $n$ items. Then \[ \text{KT}( r, r') := \frac{1}{{n \choose 2}}\sum_{ \{i < j : i,j \in [n]\}} \mathrm{sign}(r(i) - r(j))\mathrm{sign}(r'(i) - r'(j))\] is the Kendall's tau correlation between two rankings.

\paragraph{(Disparity of) Group exposure} This definition was first proposed by \cite{singh2019Policy}. Assume each item belongs to one of two groups. Let $G_1$ (respectively $G_0$) be the set of items for a query $q$ that belongs to group 1 (respectively group 0). For $i \in \{0,1\}$, let $M_{G_i} = \frac{1}{|G_i|} \sum_{d \in G_i} \mathrm{rel}_{q}(d)$, which is referred to as the merit of group $i$ for query $q$. For a ranking $r$ and for $i \in \{0,1\}$, let $v_r(G_i) = \frac{1}{|G_i|} \sum_{d \in G_i} \frac{1}{\log_2(r(d)+1)}$. Because we learn a distribution over rankings and the number of rankings is too large, we cannot compute the expected value of $v_r(G_i)$ over this distribution. Instead, we sample $N$ rankings (where again $N = 10$ for synthetic data, $N = 25$ for German credit data, and $N = 32$ for Microsoft Learning to Rank data) from the Placket-Luce model. Let $R_q$ be the set of these $N$ sampled rankings for query $q$. Then the stochastic disparity of group exposure for query $q$ is

\[  
  \begin{cases}
    \max\left\{0, \frac{\frac{1}{N} \sum_{r \in R_q} v_r(G_0)}{M_{G_0}} - \frac{\frac{1}{N} \sum_{r \in R_q } v_r(G_1)}{M_{G_1}} \right\} & \text{if } M_{G_0} \geq M_{G_1} > 0 \\
    \max\left\{0, \frac{\frac{1}{N} \sum_{r \in R_q } v_r(G_1)}{M_{G_1}} - \frac{\frac{1}{N} \sum_{r \in R_q } v_r(G_0)}{M_{G_0}} \right\} & \text{if } 0 < M_{G_0} < M_{G_1}\\ 
    0 &  \text{if } M_{G_0} = 0 \text{ or } M_{G_1} = 0.
  \end{cases}
\]

In the language of \cite{singh2019Policy}, we use the identity function for merit, and set the position bias at position $j$ to be $\frac{1}{\log_2(1+j)}$ just as they did.

\subsection{SenSTIR implementation details}
We implement SenSTIR in TensorFlow and use the Python \texttt{POT} package to compute the fair distance between queries and to compute Equation \eqref{r_lambda_max}, which requires solving optimal transport problems. Throughout this section, variable names from our code are italicized, and the abbreviation we use to refer to these variables/hyperparameters are followed in parenthesis.

\paragraph{Fair regularizer optimization} Recall that in all of the experiments, the fair metric $d_{\cX}$ on items is the Euclidean distance of the data projected onto the orthogonal complement of a subspace. In order to optimize for the fair regularizer in Equation \eqref{eq:sensei}, first we optimize over this subspace, and we refer to this step as the \textit{subspace attack}. Note, the distance between the original queries and the resulting adversarial queries in the subspace is 0. Second, we use the resulting adversarial queries in the subspace as an initialization to the \textit{full attack}, i.e. we find adversarial queries that have a non-zero fair distance to the original queries. We implement both using the Adam optimizer \citep{kingma2014adam}.

\paragraph{Learning rates} As mentioned above, we use the Adam optimizer to optimize the fair regularizer. For the subspace attack, we set the learning rate to \emph{adv\_step}($as$) and train for \emph{adv\_epoch}($ae$) epochs, and for the full attack, we set the learning rate to \emph{l2\_attack}($fs$) and train for \emph{adv\_epoch\_full}($fe$) epochs. We also use the Adam optimizer with a learning rate of .001 to learn the parameters of the score function $h_{\theta}$.

\paragraph{Fair start} Our code allows training the baseline (i.e. when $\rho = 0$) for a percentage--given by \emph{fair\_start}($frs$)--of the total number of epochs before the optimization includes the fair regularizer. 

\paragraph{Using baseline for variance reduction} Following \cite{singh2019Policy}, in the gradient estimate of the empirical version of $\Ex_{q\sim Q}\big[U(\pi\mid q)\big]$ in Equation \eqref{eq:sensei}, we subtract off a baseline term $b(q)$ for each query $q$, where $b(q)$ is the average utility $U(\pi\mid q)$ over the Monte Carlo samples for the query $q$. This counteracts the high variance in the gradient estimate \citep{williams1992simple}. 


\paragraph{Other hyperparameters} In Tables \ref{table:hyperparameters} and \ref{table:hyperparameters_PG}, $E$ stands for the total number of epochs used to update the score function $h_{\theta}$, $B$ stands for the batch size, $l2$ stands for the $\ell_2$ regularization strength of the weights, and $MC$ stands for the number of Monte Carlo samples used to estimate the gradient of the empirical version of $\Ex_{q\sim Q}\big[U(\pi\mid q)\big]$ in Equation \eqref{eq:sensei} for each query. 

\subsection{Hyperparameters}

For the synthetic data, we use one train/test split. For the German experiments, we use 10 random train/test splits all of which use the same hyperparameters. For the Microsoft experiments, we pick hyperparameters on the validation set (where the range of hyperparameters considered are reported below) based on the trade-off of stochastic NDCG and individual (respectively group) fairness for SenSTIR (respectively Fair-PG-Rank), and report the comparison metrics on the test set. 

\paragraph{Fair metric} For the synthetic data experiments, we use \texttt{sklearn}'s logistic regression solver to classify majority and minority individuals with $1/100$ $\ell_2$ regularization strength. For German and Microsoft, we use \texttt{sklearn}'s \texttt{RidgeCV} solver with the default hyperparameters to predict age and quality web page score, respectively. For the German experiments, when predicting age, each individual is represented in the training data exactly once, regardless of the number of queries that an individual appears in.

\paragraph{SenSTIR}
For every experiment, all weights are initialized by picking numbers in $[-.0001, .0001]$ uniformly at random, $\lambda$ in Algorithm \ref{alg:senstir} is always initialized with 2, and the learning rate for Adam for the score function $h_\theta$ is always .001. For synthetic data, the fair regularization strength $\rho$ varied in $\{.0003, .001\}$. For German, $\rho$ is varied in $\{.001, .01, 0.02, 0.03, 0.04, 0.05, 0.06, 0.06, 0.07, 0.08, 0.09, .1, 0.11, 0.12, 0.13, 0.14, 0.15, 0.16, \\
0.17, 0.18, 0.19, 0.28, 0.37, 0.46, 0.55, 0.64, 0.73, 0.82, 0.91, 1, 2, 3, 4, 5, 6, 7, 8, 9, 10, 50, 100\}$. For Microsoft, $\rho$ is varied in $\{.00001, .0001, .001, .01, .04, .07, .1, .33, .66, 1.\}$. We report results for all choices of $\rho$. 

See Table \ref{table:hyperparameters} for the remaining values of hyperparameters where the column names have been defined in the previous section except for $\eps$, which refers to $\eps$ in the definition of the fair regularizer. For Microsoft, the best performing hyperparameters on the validation set are reported where the $\ell_2$ regularization parameter for the weights are varied in $\{.001, .0001, 0\}$, $as$ is varied in $\{.01, .001\}$, $ae$ and $fe$ are varied in $\{20, 40\}$, and $\eps$ is varied in $\{1, .1, .01\}$.

\begin{table}[h]
\centering
\caption{SenSTIR hyperparameter choices}
\label{table:hyperparameters}
\begin{tabular}{lllllllllll}
\toprule
{} & $E$ &  $B$  & $as$ & $ae$ & $\eps$ & $fs$ & $fe$ & $frs$ & $l2$ & $MC$\\
\midrule
Synthetic &  2K & 1 & 0.001 & 20 & 0.001 & 0.001 & 20 & 0 & 0 & 10 \\
German &  20K & 10 & .01 & 20 & 1 & 0.001 & 20 & .1 & 0 & 25 \\
Microsoft &  68K & 10 & .01 & 40 & .01 & 0.001 & 40 & .1 & 0.001 & 32\\
\bottomrule
\end{tabular}
\end{table}

\paragraph{Baseline and Project} For the baseline (i.e. $\rho = 0$ with no fair regularization) and project baseline, we use the same number of epochs, batch sizes, Monte Carlo samples, and $\ell_2$ regularization as in Table \ref{table:hyperparameters} for SenSTIR. Furthermore, we use the same weight initialization and learning rate for Adam as in the SenSTIR experiments. 

\paragraph{Fair-PG-Rank} 
We use the implementation found at \url{https://github.com/ashudeep/Fair-PGRank} for the synthetic and German experiments, whereas we use our own implementation for the Microsoft experiments because we could not get their code to run on this data. They use Adam for optimization, and the learning rate is .1 for the synthetic data and .001 for German and Microsoft. Let $\lambda$ refer to the Fair-PG-Rank fair regularization strength. For synthetic, $\lambda = 25$. For German, $\lambda$ is varied in $\{ .1, 1, 1.5, 2, 2.5, 3, 3.5, 4\}$. For Microsoft, $\lambda$ is varied in $\{.001, .01, .1, .5, 1, 2,3,4,5,6,7,8,9, 10, 50, 100, 500, 150, 200, 250, 300, 350, 400, 450, 500, 550, \\ 600, 650, 700, 750, 800, 850, 900, 950, 1000\}$. We report results for all choices of $\lambda$. See Table \ref{table:hyperparameters_PG} which summarizes the remaining hyperparameter choices.

\begin{table}[h]
\centering
\caption{Fair-PG-Rank hyperparameter choices}
\label{table:hyperparameters_PG}
\begin{tabular}{lllll}
\toprule
{} & $E$ &  $B$  & $l2$ & $MC$\\
\midrule
Synthetic &  5 & 1 & 0 & 10 \\
German &  100 & 1 & 0 & 25 \\
Microsoft &  68K & 10 & .01 & 32\\
\bottomrule
\end{tabular}
\end{table}

\end{appendix}
\end{document}


\maketitle

\begin{appendix}
\section{Proofs of Theoretical Results}
\label{appendix:theory}

\begin{theorem}[Theorem \ref{R_generalize}]
If assumptions A1 and A2 hold and $J(\mathcal{D})$ is finite, then with probability at least $1 - t$
\begin{align*}
    \sup_{h \in \mathcal{H}} |\hat{R}(h) - R(h)| \leq \frac{48(J(\mathcal{D}) + \varepsilon^{-1} D_\cX D_\cY) }{\sqrt{n}} + D_\cY \left(\frac{\log \frac{2}{t}}{2n}\right)^{\frac12} + \frac{D_\cY \eta_d}{\varepsilon}.
\end{align*}
\end{theorem}

\begin{proof}
For queries $q,q'$ let 
\begin{align*}
\Delta(q,q') = \{ \Pi \in \Delta(\cX \times \cX) : \Pi(\cX, \cdot) = \frac1n\sum_{j=1}^n\delta_{\varphi(d_j^q)},\, \Pi(\cdot, \cX) = \frac1n\sum_{j=1}^n\delta_{\varphi(d_j^{q'})} \}.
\end{align*}
Let $\Pi^* \in \argmin_{\Pi \in \Delta(q,q')} \Ex_\Pi[d_\cX (X,X')]$ and observe that by assumption A2 and the definition of $d_Q$ and $\hat{d}_Q$ we have
\begin{align*}
    \hat{d}_\cQ(q,q') - d_\cQ(q,q') &= \inf_{\Pi \in \Delta(q,q')} \Ex_\Pi[\hat{d}_\cX (X,X')] - \inf_{\Pi \in \Delta(q,q')} \Ex_\Pi[d_\cX(X,X')] \\
    &= \inf_{\Pi \in \Delta(q,q')} \Ex_\Pi[\hat{d}_\cX (X,X')] - \Ex_{\Pi^*}[d_\cX(X,X')]\\
    &\leq \Ex_{\Pi^*}[\hat{d}_\cX(X,X')] - \Ex_{\Pi^*}[d_\cX(X,X')]\\
    &= \Ex_{\Pi^*}[\hat{d}_\cX(X,X') - d_\cX(X,X')]\\
    &\leq \eta_d.
\end{align*}
Similarly, 
\begin{align*}
d_Q(q,q') - \hat{d}_Q(q,q') \leq \Ex_{\hat{\Pi}^*}[d_\cX(X,X') - \hat{d}_\cX(X,X') ] \leq \eta_d.
\end{align*}
It follows that 
\begin{align}\label{d_Q_ineq}
    |\hat{d}_Q(q,q') - d_Q(q,q')| \leq \eta_d.
\end{align}

Next, we will bound the difference $|\hat{R}(h) - R(h)|$. To lighten the notation, we write $h,h'$ for $h=h(\phi(d^q)), h' = h(\phi(d^{q'}))$. From the dual representation of $R(h)$ and $\hat{R}(h)$ we have
\begin{align}
    \hat{R}(h) - R(h) &= \inf_{\lambda\geq 0} \{ \lambda \epsilon + \Ex_{q \sim \hat{Q}} [\hat{r}_\lambda(h,q)]\}  - \inf_{\lambda \geq 0} \{ \lambda \epsilon + \Ex_{q \sim Q} [r_\lambda(h,q)] \}\\
    &= \inf_{\lambda\geq 0} \{ \lambda \epsilon + \Ex_{q \sim \hat{Q}} [\hat{r}_\lambda(h,q)]\} - \lambda^* \epsilon - \Ex_{q \sim \hat{Q}} [\hat{r}_{\lambda^*}(h,q)]\\
    &\leq \Ex_{q \sim \hat{Q}} [\hat{r}_{\lambda^*}(h,q)] -  \Ex_{q \sim Q} [r_{\lambda^*}(h,q)]\\
    &= \Ex_{q \sim \hat{Q}} [r_{\lambda^*}(h,q)] - \Ex_{q \sim Q} [r_{\lambda^*}(h,q)] + \Ex_{q \sim \hat{Q}} [\hat{r}_{\lambda^*}(h,q) - r_{\lambda^*}(h,q)]. \label{R_ineq}
\end{align}

To bound the last term, note that
\begin{align}
    |\hat{r}_{\lambda^*}(h,q) - r_{\lambda^*}(h,q)| &= \sup_{q'} \{ \hat{d}_{\cY}(h,h') - \lambda^* d_{\cQ}(q,q')\} - \sup_{q'} \{ \hat{d}_{\cY}(h,h') - \lambda^* d_{\cQ}(q,q')\}\\
    &\leq \lambda^* \sup_{q'} \{ |d_\cQ(q,q') - \hat{d}_\cQ(q,q')|\\
    &\leq \lambda^* \eta_d. \label{r_ineq}
\end{align}
Combining (\ref{r_ineq}) and (\ref{R_ineq}) yields
\begin{align}\label{R_hat-R}
    \hat{R}(h) - R(h) \leq \Ex_{q \sim \hat{Q}} [r_{\lambda^*}(h,q)] - \Ex_{q \sim Q} [r_{\lambda^*}(h,q)] + \lambda^* \eta_d.
\end{align}
Using a similar argument, 
\begin{align}\label{R-R_hat}
    R(h) - \hat{R}(h) \leq \Ex_{q \sim Q} [r_{\hat{\lambda}^*}(h,q)] - \Ex_{q \sim \hat{Q}} [r_{\hat{\lambda}^*}(h,q)] + \hat{\lambda}^* \eta_d.
\end{align}


To find an upper bound on $\lambda^*$, observe that $r_\lambda(h,q) \geq 0$ for all $h\in \mathcal{H}, \lambda \geq 0$, as 
\begin{align*}
    r_\lambda(h,q) &= \sup_{q' \in \cX} \{ d_\cY(h,h') - \lambda d_\cQ(q,q') \}\\
    &\geq d_\cY(h,h) - \lambda d_\cQ(q,q) = 0.
\end{align*}
Thus 
\begin{align*}
    \lambda^* \varepsilon \leq \lambda^\star \varepsilon + \Ex_{q \sim \cQ}[r_\lambda(h,q)] = R(h) \leq D_\cY.
\end{align*}
Rearranging the above yields $\lambda^* \leq \frac{D_\cY}{\varepsilon}$ and the same upper bound is also valid for $\hat{\lambda}^\star$ by the same argument.

Combining inequalities (\ref{R_hat-R},\ref{R-R_hat}) and the bound on $\lambda^*, \hat{\lambda}^*$, we can write
\begin{align*}
    |\hat{R}(h) - R(h)| \leq \sup_{f \in \mathcal{F}} \left| \Ex_{q\sim \hat{Q}} f(q) - \Ex_{q \sim Q} f(q) \right| + \frac{D_\cY \eta_d}{\varepsilon},
\end{align*}
where $\mathcal{F} = \{ r_\lambda(h,\cdot): \lambda \in [0, L],\, h \in \mathcal{H} \}$. A standard concentration argument proves
\begin{align*}
  \sup_{f \in \mathcal{F}} \left| \Ex_{q\sim \hat{Q}} f(q) - \Ex_{q \sim Q} f(q) \right|  \leq \frac{48(J(\mathcal{D}) + \varepsilon^{-1} D_\cX D_\cY }{\sqrt{n}} + D_\cY (\frac{\log \frac{2}{t}}{2n})^{\frac12}
\end{align*}
with probability at least $1 - t$. This completes the proof of the theorem.
\end{proof}

The main technical novelty in this proof is the bound on $\lambda_*$ in terms of the diameter of the output space. This restricts the set of possible $c$-transformed loss function class, thereby allowing us to appeal to standard techniques from empirical process theory to obtain uniform convergence results. Prior work in this area (\eg\ \citet{lee2017Minimax}) relies on smoothness properties of the loss instead of the geometric properties of the output space, but this precludes non-smooth output metrics.

\section{Experiments}
All experiments were ran a cluster of CPUS. We do not require a GPU.

\label{appendix:experiments}

\subsection{Data sets and pre-processing}
\paragraph{Synthetic} Synthetic data is generated as described in the main text such that there are 100 queries in the training set and 100 queries in the test set.

\paragraph{German Credit}
The German Credit data set \cite{Dua:2019} consists of 1000 individuals with binary labels indicating if they are credit worthy or not. We use the version of the German Credit data set that \cite{singh2019Policy} uses found at \url{https://www.kaggle.com/uciml/german-credit}. In particular, this version of the Geramn Credit data set only uses the following features: \texttt{age} (integer), \texttt{sex} (binary, does not include any marital status information unlike the original data set), \texttt{job} (categorical), \texttt{housing} (categorical), \texttt{savings account} (categorical), \texttt{checking account} (integer), \texttt{credit amount} (integer), \texttt{duration} (integer), and \texttt{purpose} (categorical). See \cite{Dua:2019} for an explanation of each feature.

Categorical features are the only features with missing data, so we treat missing data as its own category. The following features are standardized by subtracting the mean and dividing by the standard deviation (before this data is turned into LTR data): \texttt{age}, \texttt{duration}, and \texttt{credit amount}. The remaining binary and categorical features are one hot encoded. 

We use an 80/20 train/test split of the original 1000 data points, and then sample from the training/testing set with replacement to build the LTR data as discussed in the main text. For our experiments, we use 10 random train/test splits.

\paragraph{Microsoft Learning to Rank}

The Microsoft Learning to Rank data set \cite{DBLP:journals/corr/QinL13} consists of query-web page pairs each of which has 136 features and integral relevance scores in $[0,4]$. Following \cite{yadav2019Fair}, we use the data in Fold 1 and adopt the given train/validation/test split. The data and feature descriptions can be found at \url{https://www.microsoft.com/en-us/research/project/mslr/}. We remove the \texttt{QualityScore} feature (feature 132) since we use the \texttt{QualityScore2} (feature 133) feature to learn the fair metric, and it appears based on the description of these features, they are very similar. We standardize the remaining features (except for the features corresponding to \texttt{Boolean model}, i.e. features 96-100, which are binary) by subtracting the mean and dividing by the standard deviation. Following \cite{yadav2019Fair}, we remove any queries with less than 20 web pages. Furthermore, we only consider queries that have at least one web page with a relevance of 4.  For each query, we sample 20 web pages without replacement until at least one of the 20 sampled web pages has a relevance of 4. After pre-processing, there are 33,060 train queries, 11,600 validation queries, and 11,200 test queries.

\subsection{Comparison Metrics}

Let $r$ be a ranking (i.e. permutation) of a set of $n$ items that are enumerated such that $r(i) \in [n]$ is the position of the $i$-th item in the ranking and $r^{-1}(i) \in [n]$ is the item that is ranked $i$-th. Let $\mathrm{rel}_q(i)$ be the relevance of item $i$ given a query $q$.

\paragraph{Normalized Discounted Cumulative Gain (NDCG)} Let $S_n$ be the set of all rankings on $n$ items. The discounted cumulative gain (DCG) of a ranking $r$ is \[\mathrm{DCG}(r) = \sum_{i=1}^n \frac{2^{\mathrm{rel}_q(r^{-1}(i))} -1}{\log_2(i+1)}.\] The NDCG of a ranking $r$ is \[\frac{\mathrm{DCG}(r)}{\max_{r' \in S_n} \mathrm{DCG}(r')}.\] 

Because we learn a distribution over rankings and the number of rankings is too large, we cannot compute the expected value of the NDCG for a given query. Thus, for each query in the test set, we sample $N$ rankings (where $N = 10$ for synthetic data, $N = 25$ for German credit data, and $N = 32$ for Microsoft Learning to Rank data) from the Placket-Luce distribution, compute the NDCG for each of these rankings, and then take an average. We refer to this quantity as the \textit{stochastic NDCG}.

\paragraph{Weighted Kendall's tau correlation} Let $w(i,j) := \frac{1}{i} + \frac{1}{j}$. Let $r$ and $r’$ be two rankings on $n$ items. Let \[ \langle r, r’ \rangle := \sum_{ \{i < j : i,j \in [n]\}} \mathrm{sign}(r(i) - r(j))\mathrm{sign}(r’(i) - r’(j)) w(r^{-1}(i),r^{-1}(j)) \in [-1,1].\]The weighted Kendall's tau correlation between two rankings $r$ and $r'$ 
\[\mathrm{KT}(r, r') =  \frac{\langle r, r’ \rangle}{ \sqrt{\langle r, r \rangle \langle r’, r’ \rangle}}. \] The weighted Kendall’s tau correlation measures how close two rankings are but penalizes differences at the top of the ranking more than differences at the bottom of the ranking. In our code, we use the \texttt{weightedtau} function in \texttt{scipy} with the default parameters.

\paragraph{(Disparity of) Group exposure} This definition was first proposed by \cite{singh2019Policy}. Assume each item belongs to one of two groups. Let $G_1$ (respectively $G_0$) be the set of items for a query $q$ that belongs to group 1 (respectively group 0). For $i \in \{0,1\}$, let $M_{G_i} = \frac{1}{|G_i|} \sum_{d \in G_i} \mathrm{rel}_{q}(d)$, which is referred to as the merit of group $i$ for query $q$. For a ranking $r$ and for $i \in \{0,1\}$, let $v_r(G_i) = \frac{1}{|G_i|} \sum_{d \in G_i} \frac{1}{\log_2(r(d)+1)}$. Because we learn a distribution over rankings and the number of rankings is too large, we cannot compute the expected value of $v_r(G_i)$ over this distribution. Instead, we sample $N$ rankings (where again $N = 10$ for synthetic data, $N = 25$ for German credit data, and $N = 32$ for Microsoft Learning to Rank data) from the Placket-Luce model. Let $R_q$ be the set of these $N$ sampled rankings for query $q$. Then the stochastic disparity of group exposure for query $q$ is

\[  
  \begin{cases}
    \max\left\{0, \frac{\frac{1}{N} \sum_{r \in R_q} v_r(G_0)}{M_{G_0}} - \frac{\frac{1}{N} \sum_{r \in R_q } v_r(G_1)}{M_{G_1}} \right\} & \text{if } M_{G_0} \geq M_{G_1} > 0 \\
    \max\left\{0, \frac{\frac{1}{N} \sum_{r \in R_q } v_r(G_1)}{M_{G_1}} - \frac{\frac{1}{N} \sum_{r \in R_q } v_r(G_0)}{M_{G_0}} \right\} & \text{if } 0 < M_{G_0} < M_{G_1}\\ 
    0 &  \text{if } M_{G_0} = 0 \text{ or } M_{G_1} = 0.
  \end{cases}
\]

In the language of \cite{singh2019Policy}, we use the identity function for merit, and set the position bias at position $j$ to be $\frac{1}{\log_2(1+j)}$ just as they did.

\subsection{SenSTIR implementation details}
We implement SenSTIR in TensorFlow and use the Python \texttt{POT} package to compute the fair distance between queries and to compute Equation \eqref{r_lambda_max}, which requires solving optimal transport problems. Throughout this section, variable names from our code are italicized, and the abbreviation we use to refer to these variables/hyperparameters are followed in parenthesis.

\paragraph{Fair regularizer optimization} Recall that in all of the experiments, the fair metric $d_{\cX}$ on items is the Euclidean distance of the data projected onto the orthogonal complement of a subspace. In order to optimize for the fair regularizer in Equation \eqref{eq:sensei}, first we optimize over this subspace, and we refer to this step as the \textit{subspace attack}. Note, the distance between the original queries and the resulting adversarial queries in the subspace is 0. Second, we use the resulting adversarial queries in the subspace as an initialization to the \textit{full attack}, i.e. we find adversarial queries that have a non-zero fair distance to the original queries. We implement both using the Adam optimizer \citep{kingma2014adam}.

\paragraph{Learning rates} As mentioned above, we use the Adam optimizer to optimize the fair regularizer. For the subspace attack, we set the learning rate to \emph{adv\_step}($as$) and train for \emph{adv\_epoch}($ae$) epochs, and for the full attack, we set the learning rate to \emph{l2\_attack}($fs$) and train for \emph{adv\_epoch\_full}($fe$) epochs. We also use the Adam optimizer with a learning rate of .001 to learn the parameters of the score function $h_{\theta}$.

\paragraph{Fair start} Our code allows training the baseline (i.e. when $\rho = 0$) for a percentage--given by \emph{fair\_start}($frs$)--of the total number of epochs before the optimization includes the fair regularizer. 

\paragraph{Using baseline for variance reduction} Following \cite{singh2019Policy}, in the gradient estimate of the empirical version of $\Ex_{q\sim Q}\big[U(\pi\mid q)\big]$ in Equation \eqref{eq:sensei}, we subtract off a baseline term $b(q)$ for each query $q$, where $b(q)$ is the average utility $U(\pi\mid q)$ over the Monte Carlo samples for the query $q$. This counteracts the high variance in the gradient estimate \cite{williams1992simple}. 


\paragraph{Other hyperparameters} In Tables \ref{table:hyperparameters} and \ref{table:hyperparameters_PG}, $E$ stands for the total number of epochs used to update the score function $h_{\theta}$, $B$ stands for the batch size, $l2$ stands for the $\ell_2$ regularization strength of the weights, and $MC$ stands for the number of Monte Carlo samples used to estimate the gradient of the empirical version of $\Ex_{q\sim Q}\big[U(\pi\mid q)\big]$ in Equation \eqref{eq:sensei} for each query. 

\subsection{Hyperparameters}

For the synthetic data, we use one train/test split. For the German experiments, we use 10 random train/test splits all of which use the same hyperparameters. For the Microsoft experiments, we pick hyperparameters on the validation set (where the range of hyperparameters considered are reported below) based on the trade-off of stochastic NDCG and individual (respectively group) fairness for SenSTIR (respectively Fair-PG-Rank), and report the comparison metrics on the test set. 

\paragraph{Fair metric} For the synthetic data experiments, we use \texttt{sklearn}'s logistic regression solver to classify majority and minority individuals with $1/100$ $\ell_2$ regularization strength. For German and Microsoft, we use \texttt{sklearn}'s \texttt{RidgeCV} solver with the default hyperparameters to predict age and quality web page score, respectively. For the German experiments, when predicting age, each individual is represented in the training data exactly once, regardless of the number of queries that an individual appears in.

\paragraph{SenSTIR}
For every experiment, all weights are initialized by picking numbers in $[-.0001, .0001]$ uniformly at random, $\lambda$ in Algorithm \ref{alg:senstir} is always initialized with 2, and the learning rate for Adam for the score function $h_\theta$ is always .001. For synthetic data, the fair regularization strength $\rho$ varied in $\{.0003, .001\}$. For German, $\rho$ is varied in $\{.001, .01, 0.02, 0.03, 0.04, 0.05, 0.06, 0.06, 0.07, 0.08, 0.09, .1, 0.11, 0.12, 0.13, 0.14, 0.15, 0.16, \\
0.17, 0.18, 0.19, 0.28, 0.37, 0.46, 0.55, 0.64, 0.73, 0.82, 0.91, 1, 2, 3, 4, 5, 6, 7, 8, 9, 10, 50, 100\}$. For Microsoft, $\rho$ is varied in $\{.00001, .0001, .001, .01, .04, .07, .1, .33, .66, 1.\}$. We report results for all choices of $\rho$. 

See Table \ref{table:hyperparameters} for the remaining values of hyperparameters where the column names have been defined in the previous section except for $\eps$, which refers to $\eps$ in the definition of the fair regularizer. For Microsoft, the best performing hyperparameters on the validation set are reported where the $\ell_2$ regularization parameter for the weights are varied in $\{.001, .0001, 0\}$, $as$ is varied in $\{.01, .001\}$, $ae$ and $fe$ are varied in $\{20, 40\}$, and $\eps$ is varied in $\{1, .1, .01\}$.

\begin{table}[h]
\centering
\caption{SenSTIR hyperparameter choices}
\label{table:hyperparameters}
\begin{tabular}{lllllllllll}
\toprule
{} & $E$ &  $B$  & $as$ & $ae$ & $\eps$ & $fs$ & $fe$ & $frs$ & $l2$ & $MC$\\
\midrule
Synthetic &  2K & 1 & 0.001 & 20 & 0.001 & 0.001 & 20 & 0 & 0 & 10 \\
German &  20K & 10 & .01 & 20 & 1 & 0.001 & 20 & .1 & 0 & 25 \\
Microsoft &  68K & 10 & .01 & 40 & .01 & 0.001 & 40 & .1 & 0.001 & 32\\
\bottomrule
\end{tabular}
\end{table}

\paragraph{Baseline and Project} For the baseline (i.e. $\rho = 0$ with no fair regularization) and project baseline, we use the same number of epochs, batch sizes, Monte Carlo samples, and $\ell_2$ regularization as in Table \ref{table:hyperparameters} for SenSTIR. Furthermore, we use the same weight initialization and learning rate for Adam as in the SenSTIR experiments. 

\paragraph{Fair-PG-Rank} 
We use the implementation found at \url{https://github.com/ashudeep/Fair-PGRank} for the synthetic and German experiments, whereas we use our own implementation for the Microsoft experiments because we could not get their code to run on this data. They use Adam for optimization, and the learning rate is .1 for the synthetic data and .001 for German and Microsoft. Let $\lambda$ refer to the Fair-PG-Rank fair regularization strength. For synthetic, $\lambda = 25$. For German, $\lambda$ is varied in $\{ .1, 1, 1.5, 2, 2.5, 3, 3.5, 4\}$. For Microsoft, $\lambda$ is varied in $\{.001, .01, .1, .5, 1, 2,3,4,5,6,7,8,9, 10, 50, 100, 500, 150, 200, 250, 300, 350, 400, 450, 500, 550, \\ 600, 650, 700, 750, 800, 850, 900, 950, 1000\}$. We report results for all choices of $\lambda$. See Table \ref{table:hyperparameters_PG} which summarizes the remaining hyperparameter choices.

\begin{table}[h]
\centering
\caption{Fair-PG-Rank hyperparameter choices}
\label{table:hyperparameters_PG}
\begin{tabular}{lllll}
\toprule
{} & $E$ &  $B$  & $l2$ & $MC$\\
\midrule
Synthetic &  5 & 1 & 0 & 10 \\
German &  100 & 1 & 0 & 25 \\
Microsoft &  68K & 10 & .01 & 32\\
\bottomrule
\end{tabular}
\end{table}

\end{appendix}

\bibliographystyle{plainnat}
\bibliography{YK, AB}


\maketitle

\begin{appendix}
\section{Proofs of Theoretical Results}
\label{appendix:theory}

\begin{theorem}[Theorem \ref{R_generalize}]
If assumptions A1 and A2 hold and $J(\mathcal{D})$ is finite, then with probability at least $1 - t$
\begin{align*}
    \sup_{h \in \mathcal{H}} |\hat{R}(h) - R(h)| \leq \frac{48(J(\mathcal{D}) + \varepsilon^{-1} D_\cX D_\cY) }{\sqrt{n}} + D_\cY \left(\frac{\log \frac{2}{t}}{2n}\right)^{\frac12} + \frac{D_\cY \eta_d}{\varepsilon}.
\end{align*}
\end{theorem}

\begin{proof}
For queries $q,q'$ let 
\begin{align*}
\Delta(q,q') = \{ \Pi \in \Delta(\cX \times \cX) : \Pi(\cX, \cdot) = \frac1n\sum_{j=1}^n\delta_{\varphi(d_j^q)},\, \Pi(\cdot, \cX) = \frac1n\sum_{j=1}^n\delta_{\varphi(d_j^{q'})} \}.
\end{align*}
Let $\Pi^* \in \argmin_{\Pi \in \Delta(q,q')} \Ex_\Pi[d_\cX (X,X')]$ and observe that by assumption A2 and the definition of $d_Q$ and $\hat{d}_Q$ we have
\begin{align*}
    \hat{d}_\cQ(q,q') - d_\cQ(q,q') &= \inf_{\Pi \in \Delta(q,q')} \Ex_\Pi[\hat{d}_\cX (X,X')] - \inf_{\Pi \in \Delta(q,q')} \Ex_\Pi[d_\cX(X,X')] \\
    &= \inf_{\Pi \in \Delta(q,q')} \Ex_\Pi[\hat{d}_\cX (X,X')] - \Ex_{\Pi^*}[d_\cX(X,X')]\\
    &\leq \Ex_{\Pi^*}[\hat{d}_\cX(X,X')] - \Ex_{\Pi^*}[d_\cX(X,X')]\\
    &= \Ex_{\Pi^*}[\hat{d}_\cX(X,X') - d_\cX(X,X')]\\
    &\leq \eta_d.
\end{align*}
Similarly, 
\begin{align*}
d_Q(q,q') - \hat{d}_Q(q,q') \leq \Ex_{\hat{\Pi}^*}[d_\cX(X,X') - \hat{d}_\cX(X,X') ] \leq \eta_d.
\end{align*}
It follows that 
\begin{align}\label{d_Q_ineq}
    |\hat{d}_Q(q,q') - d_Q(q,q')| \leq \eta_d.
\end{align}

Next, we will bound the difference $|\hat{R}(h) - R(h)|$. To lighten the notation, we write $h,h'$ for $h=h(\phi(d^q)), h' = h(\phi(d^{q'}))$. From the dual representation of $R(h)$ and $\hat{R}(h)$ we have
\begin{align}
    \hat{R}(h) - R(h) &= \inf_{\lambda\geq 0} \{ \lambda \epsilon + \Ex_{q \sim \hat{Q}} [\hat{r}_\lambda(h,q)]\}  - \inf_{\lambda \geq 0} \{ \lambda \epsilon + \Ex_{q \sim Q} [r_\lambda(h,q)] \}\\
    &= \inf_{\lambda\geq 0} \{ \lambda \epsilon + \Ex_{q \sim \hat{Q}} [\hat{r}_\lambda(h,q)]\} - \lambda^* \epsilon - \Ex_{q \sim \hat{Q}} [\hat{r}_{\lambda^*}(h,q)]\\
    &\leq \Ex_{q \sim \hat{Q}} [\hat{r}_{\lambda^*}(h,q)] -  \Ex_{q \sim Q} [r_{\lambda^*}(h,q)]\\
    &= \Ex_{q \sim \hat{Q}} [r_{\lambda^*}(h,q)] - \Ex_{q \sim Q} [r_{\lambda^*}(h,q)] + \Ex_{q \sim \hat{Q}} [\hat{r}_{\lambda^*}(h,q) - r_{\lambda^*}(h,q)]. \label{R_ineq}
\end{align}

To bound the last term, note that
\begin{align}
    |\hat{r}_{\lambda^*}(h,q) - r_{\lambda^*}(h,q)| &= \sup_{q'} \{ \hat{d}_{\cY}(h,h') - \lambda^* d_{\cQ}(q,q')\} - \sup_{q'} \{ \hat{d}_{\cY}(h,h') - \lambda^* d_{\cQ}(q,q')\}\\
    &\leq \lambda^* \sup_{q'} \{ |d_\cQ(q,q') - \hat{d}_\cQ(q,q')|\\
    &\leq \lambda^* \eta_d. \label{r_ineq}
\end{align}
Combining (\ref{r_ineq}) and (\ref{R_ineq}) yields
\begin{align}\label{R_hat-R}
    \hat{R}(h) - R(h) \leq \Ex_{q \sim \hat{Q}} [r_{\lambda^*}(h,q)] - \Ex_{q \sim Q} [r_{\lambda^*}(h,q)] + \lambda^* \eta_d.
\end{align}
Using a similar argument, 
\begin{align}\label{R-R_hat}
    R(h) - \hat{R}(h) \leq \Ex_{q \sim Q} [r_{\hat{\lambda}^*}(h,q)] - \Ex_{q \sim \hat{Q}} [r_{\hat{\lambda}^*}(h,q)] + \hat{\lambda}^* \eta_d.
\end{align}


To find an upper bound on $\lambda^*$, observe that $r_\lambda(h,q) \geq 0$ for all $h\in \mathcal{H}, \lambda \geq 0$, as 
\begin{align*}
    r_\lambda(h,q) &= \sup_{q' \in \cX} \{ d_\cY(h,h') - \lambda d_\cQ(q,q') \}\\
    &\geq d_\cY(h,h) - \lambda d_\cQ(q,q) = 0.
\end{align*}
Thus 
\begin{align*}
    \lambda^* \varepsilon \leq \lambda^\star \varepsilon + \Ex_{q \sim \cQ}[r_\lambda(h,q)] = R(h) \leq D_\cY.
\end{align*}
Rearranging the above yields $\lambda^* \leq \frac{D_\cY}{\varepsilon}$ and the same upper bound is also valid for $\hat{\lambda}^\star$ by the same argument.

Combining inequalities (\ref{R_hat-R},\ref{R-R_hat}) and the bound on $\lambda^*, \hat{\lambda}^*$, we can write
\begin{align*}
    |\hat{R}(h) - R(h)| \leq \sup_{f \in \mathcal{F}} \left| \Ex_{q\sim \hat{Q}} f(q) - \Ex_{q \sim Q} f(q) \right| + \frac{D_\cY \eta_d}{\varepsilon},
\end{align*}
where $\mathcal{F} = \{ r_\lambda(h,\cdot): \lambda \in [0, L],\, h \in \mathcal{H} \}$. A standard concentration argument proves
\begin{align*}
  \sup_{f \in \mathcal{F}} \left| \Ex_{q\sim \hat{Q}} f(q) - \Ex_{q \sim Q} f(q) \right|  \leq \frac{48(J(\mathcal{D}) + \varepsilon^{-1} D_\cX D_\cY }{\sqrt{n}} + D_\cY (\frac{\log \frac{2}{t}}{2n})^{\frac12}
\end{align*}
with probability at least $1 - t$. This completes the proof of the theorem.
\end{proof}

The main technical novelty in this proof is the bound on $\lambda_*$ in terms of the diameter of the output space. This restricts the set of possible $c$-transformed loss function class, thereby allowing us to appeal to standard techniques from empirical process theory to obtain uniform convergence results. Prior work in this area (\eg\ \citet{lee2017Minimax}) relies on smoothness properties of the loss instead of the geometric properties of the output space, but this precludes non-smooth output metrics.

\section{Experiments}
All experiments were ran a cluster of CPUS. We do not require a GPU.

\label{appendix:experiments}

\subsection{Data sets and pre-processing}
\paragraph{Synthetic} Synthetic data is generated as described in the main text such that there are 100 queries in the training set and 100 queries in the test set.

\paragraph{German Credit}
The German Credit data set \cite{Dua:2019} consists of 1000 individuals with binary labels indicating if they are credit worthy or not. We use the version of the German Credit data set that \cite{singh2019Policy} uses found at \url{https://www.kaggle.com/uciml/german-credit}. In particular, this version of the Geramn Credit data set only uses the following features: \texttt{age} (integer), \texttt{sex} (binary, does not include any marital status information unlike the original data set), \texttt{job} (categorical), \texttt{housing} (categorical), \texttt{savings account} (categorical), \texttt{checking account} (integer), \texttt{credit amount} (integer), \texttt{duration} (integer), and \texttt{purpose} (categorical). See \cite{Dua:2019} for an explanation of each feature.

Categorical features are the only features with missing data, so we treat missing data as its own category. The following features are standardized by subtracting the mean and dividing by the standard deviation (before this data is turned into LTR data): \texttt{age}, \texttt{duration}, and \texttt{credit amount}. The remaining binary and categorical features are one hot encoded. 

We use an 80/20 train/test split of the original 1000 data points, and then sample from the training/testing set with replacement to build the LTR data as discussed in the main text. For our experiments, we use 10 random train/test splits.

\paragraph{Microsoft Learning to Rank}

The Microsoft Learning to Rank data set \cite{DBLP:journals/corr/QinL13} consists of query-web page pairs each of which has 136 features and integral relevance scores in $[0,4]$. Following \cite{yadav2019Fair}, we use the data in Fold 1 and adopt the given train/validation/test split. The data and feature descriptions can be found at \url{https://www.microsoft.com/en-us/research/project/mslr/}. We remove the \texttt{QualityScore} feature (feature 132) since we use the \texttt{QualityScore2} (feature 133) feature to learn the fair metric, and it appears based on the description of these features, they are very similar. We standardize the remaining features (except for the features corresponding to \texttt{Boolean model}, i.e. features 96-100, which are binary) by subtracting the mean and dividing by the standard deviation. Following \cite{yadav2019Fair}, we remove any queries with less than 20 web pages. Furthermore, we only consider queries that have at least one web page with a relevance of 4.  For each query, we sample 20 web pages without replacement until at least one of the 20 sampled web pages has a relevance of 4. After pre-processing, there are 33,060 train queries, 11,600 validation queries, and 11,200 test queries.

\subsection{Comparison Metrics}

Let $r$ be a ranking (i.e. permutation) of a set of $n$ items that are enumerated such that $r(i) \in [n]$ is the position of the $i$-th item in the ranking and $r^{-1}(i) \in [n]$ is the item that is ranked $i$-th. Let $\mathrm{rel}_q(i)$ be the relevance of item $i$ given a query $q$.

\paragraph{Normalized Discounted Cumulative Gain (NDCG)} Let $S_n$ be the set of all rankings on $n$ items. The discounted cumulative gain (DCG) of a ranking $r$ is \[\mathrm{DCG}(r) = \sum_{i=1}^n \frac{2^{\mathrm{rel}_q(r^{-1}(i))} -1}{\log_2(i+1)}.\] The NDCG of a ranking $r$ is \[\frac{\mathrm{DCG}(r)}{\max_{r' \in S_n} \mathrm{DCG}(r')}.\] 

Because we learn a distribution over rankings and the number of rankings is too large, we cannot compute the expected value of the NDCG for a given query. Thus, for each query in the test set, we sample $N$ rankings (where $N = 10$ for synthetic data, $N = 25$ for German credit data, and $N = 32$ for Microsoft Learning to Rank data) from the Placket-Luce distribution, compute the NDCG for each of these rankings, and then take an average. We refer to this quantity as the \textit{stochastic NDCG}.

\paragraph{Weighted Kendall's tau correlation} Let $w(i,j) := \frac{1}{i} + \frac{1}{j}$. Let $r$ and $r’$ be two rankings on $n$ items. Let \[ \langle r, r’ \rangle := \sum_{ \{i < j : i,j \in [n]\}} \mathrm{sign}(r(i) - r(j))\mathrm{sign}(r’(i) - r’(j)) w(r^{-1}(i),r^{-1}(j)) \in [-1,1].\]The weighted Kendall's tau correlation between two rankings $r$ and $r'$ 
\[\mathrm{KT}(r, r') =  \frac{\langle r, r’ \rangle}{ \sqrt{\langle r, r \rangle \langle r’, r’ \rangle}}. \] The weighted Kendall’s tau correlation measures how close two rankings are but penalizes differences at the top of the ranking more than differences at the bottom of the ranking. In our code, we use the \texttt{weightedtau} function in \texttt{scipy} with the default parameters.

\paragraph{(Disparity of) Group exposure} This definition was first proposed by \cite{singh2019Policy}. Assume each item belongs to one of two groups. Let $G_1$ (respectively $G_0$) be the set of items for a query $q$ that belongs to group 1 (respectively group 0). For $i \in \{0,1\}$, let $M_{G_i} = \frac{1}{|G_i|} \sum_{d \in G_i} \mathrm{rel}_{q}(d)$, which is referred to as the merit of group $i$ for query $q$. For a ranking $r$ and for $i \in \{0,1\}$, let $v_r(G_i) = \frac{1}{|G_i|} \sum_{d \in G_i} \frac{1}{\log_2(r(d)+1)}$. Because we learn a distribution over rankings and the number of rankings is too large, we cannot compute the expected value of $v_r(G_i)$ over this distribution. Instead, we sample $N$ rankings (where again $N = 10$ for synthetic data, $N = 25$ for German credit data, and $N = 32$ for Microsoft Learning to Rank data) from the Placket-Luce model. Let $R_q$ be the set of these $N$ sampled rankings for query $q$. Then the stochastic disparity of group exposure for query $q$ is

\[  
  \begin{cases}
    \max\left\{0, \frac{\frac{1}{N} \sum_{r \in R_q} v_r(G_0)}{M_{G_0}} - \frac{\frac{1}{N} \sum_{r \in R_q } v_r(G_1)}{M_{G_1}} \right\} & \text{if } M_{G_0} \geq M_{G_1} > 0 \\
    \max\left\{0, \frac{\frac{1}{N} \sum_{r \in R_q } v_r(G_1)}{M_{G_1}} - \frac{\frac{1}{N} \sum_{r \in R_q } v_r(G_0)}{M_{G_0}} \right\} & \text{if } 0 < M_{G_0} < M_{G_1}\\ 
    0 &  \text{if } M_{G_0} = 0 \text{ or } M_{G_1} = 0.
  \end{cases}
\]

In the language of \cite{singh2019Policy}, we use the identity function for merit, and set the position bias at position $j$ to be $\frac{1}{\log_2(1+j)}$ just as they did.

\subsection{SenSTIR implementation details}
We implement SenSTIR in TensorFlow and use the Python \texttt{POT} package to compute the fair distance between queries and to compute Equation \eqref{r_lambda_max}, which requires solving optimal transport problems. Throughout this section, variable names from our code are italicized, and the abbreviation we use to refer to these variables/hyperparameters are followed in parenthesis.

\paragraph{Fair regularizer optimization} Recall that in all of the experiments, the fair metric $d_{\cX}$ on items is the Euclidean distance of the data projected onto the orthogonal complement of a subspace. In order to optimize for the fair regularizer in Equation \eqref{eq:sensei}, first we optimize over this subspace, and we refer to this step as the \textit{subspace attack}. Note, the distance between the original queries and the resulting adversarial queries in the subspace is 0. Second, we use the resulting adversarial queries in the subspace as an initialization to the \textit{full attack}, i.e. we find adversarial queries that have a non-zero fair distance to the original queries. We implement both using the Adam optimizer \citep{kingma2014adam}.

\paragraph{Learning rates} As mentioned above, we use the Adam optimizer to optimize the fair regularizer. For the subspace attack, we set the learning rate to \emph{adv\_step}($as$) and train for \emph{adv\_epoch}($ae$) epochs, and for the full attack, we set the learning rate to \emph{l2\_attack}($fs$) and train for \emph{adv\_epoch\_full}($fe$) epochs. We also use the Adam optimizer with a learning rate of .001 to learn the parameters of the score function $h_{\theta}$.

\paragraph{Fair start} Our code allows training the baseline (i.e. when $\rho = 0$) for a percentage--given by \emph{fair\_start}($frs$)--of the total number of epochs before the optimization includes the fair regularizer. 

\paragraph{Using baseline for variance reduction} Following \cite{singh2019Policy}, in the gradient estimate of the empirical version of $\Ex_{q\sim Q}\big[U(\pi\mid q)\big]$ in Equation \eqref{eq:sensei}, we subtract off a baseline term $b(q)$ for each query $q$, where $b(q)$ is the average utility $U(\pi\mid q)$ over the Monte Carlo samples for the query $q$. This counteracts the high variance in the gradient estimate \cite{williams1992simple}. 


\paragraph{Other hyperparameters} In Tables \ref{table:hyperparameters} and \ref{table:hyperparameters_PG}, $E$ stands for the total number of epochs used to update the score function $h_{\theta}$, $B$ stands for the batch size, $l2$ stands for the $\ell_2$ regularization strength of the weights, and $MC$ stands for the number of Monte Carlo samples used to estimate the gradient of the empirical version of $\Ex_{q\sim Q}\big[U(\pi\mid q)\big]$ in Equation \eqref{eq:sensei} for each query. 

\subsection{Hyperparameters}

For the synthetic data, we use one train/test split. For the German experiments, we use 10 random train/test splits all of which use the same hyperparameters. For the Microsoft experiments, we pick hyperparameters on the validation set (where the range of hyperparameters considered are reported below) based on the trade-off of stochastic NDCG and individual (respectively group) fairness for SenSTIR (respectively Fair-PG-Rank), and report the comparison metrics on the test set. 

\paragraph{Fair metric} For the synthetic data experiments, we use \texttt{sklearn}'s logistic regression solver to classify majority and minority individuals with $1/100$ $\ell_2$ regularization strength. For German and Microsoft, we use \texttt{sklearn}'s \texttt{RidgeCV} solver with the default hyperparameters to predict age and quality web page score, respectively. For the German experiments, when predicting age, each individual is represented in the training data exactly once, regardless of the number of queries that an individual appears in.

\paragraph{SenSTIR}
For every experiment, all weights are initialized by picking numbers in $[-.0001, .0001]$ uniformly at random, $\lambda$ in Algorithm \ref{alg:senstir} is always initialized with 2, and the learning rate for Adam for the score function $h_\theta$ is always .001. For synthetic data, the fair regularization strength $\rho$ varied in $\{.0003, .001\}$. For German, $\rho$ is varied in $\{.001, .01, 0.02, 0.03, 0.04, 0.05, 0.06, 0.06, 0.07, 0.08, 0.09, .1, 0.11, 0.12, 0.13, 0.14, 0.15, 0.16, \\
0.17, 0.18, 0.19, 0.28, 0.37, 0.46, 0.55, 0.64, 0.73, 0.82, 0.91, 1, 2, 3, 4, 5, 6, 7, 8, 9, 10, 50, 100\}$. For Microsoft, $\rho$ is varied in $\{.00001, .0001, .001, .01, .04, .07, .1, .33, .66, 1.\}$. We report results for all choices of $\rho$. 

See Table \ref{table:hyperparameters} for the remaining values of hyperparameters where the column names have been defined in the previous section except for $\eps$, which refers to $\eps$ in the definition of the fair regularizer. For Microsoft, the best performing hyperparameters on the validation set are reported where the $\ell_2$ regularization parameter for the weights are varied in $\{.001, .0001, 0\}$, $as$ is varied in $\{.01, .001\}$, $ae$ and $fe$ are varied in $\{20, 40\}$, and $\eps$ is varied in $\{1, .1, .01\}$.

\begin{table}[h]
\centering
\caption{SenSTIR hyperparameter choices}
\label{table:hyperparameters}
\begin{tabular}{lllllllllll}
\toprule
{} & $E$ &  $B$  & $as$ & $ae$ & $\eps$ & $fs$ & $fe$ & $frs$ & $l2$ & $MC$\\
\midrule
Synthetic &  2K & 1 & 0.001 & 20 & 0.001 & 0.001 & 20 & 0 & 0 & 10 \\
German &  20K & 10 & .01 & 20 & 1 & 0.001 & 20 & .1 & 0 & 25 \\
Microsoft &  68K & 10 & .01 & 40 & .01 & 0.001 & 40 & .1 & 0.001 & 32\\
\bottomrule
\end{tabular}
\end{table}

\paragraph{Baseline and Project} For the baseline (i.e. $\rho = 0$ with no fair regularization) and project baseline, we use the same number of epochs, batch sizes, Monte Carlo samples, and $\ell_2$ regularization as in Table \ref{table:hyperparameters} for SenSTIR. Furthermore, we use the same weight initialization and learning rate for Adam as in the SenSTIR experiments. 

\paragraph{Fair-PG-Rank} 
We use the implementation found at \url{https://github.com/ashudeep/Fair-PGRank} for the synthetic and German experiments, whereas we use our own implementation for the Microsoft experiments because we could not get their code to run on this data. They use Adam for optimization, and the learning rate is .1 for the synthetic data and .001 for German and Microsoft. Let $\lambda$ refer to the Fair-PG-Rank fair regularization strength. For synthetic, $\lambda = 25$. For German, $\lambda$ is varied in $\{ .1, 1, 1.5, 2, 2.5, 3, 3.5, 4\}$. For Microsoft, $\lambda$ is varied in $\{.001, .01, .1, .5, 1, 2,3,4,5,6,7,8,9, 10, 50, 100, 500, 150, 200, 250, 300, 350, 400, 450, 500, 550, \\ 600, 650, 700, 750, 800, 850, 900, 950, 1000\}$. We report results for all choices of $\lambda$. See Table \ref{table:hyperparameters_PG} which summarizes the remaining hyperparameter choices.

\begin{table}[h]
\centering
\caption{Fair-PG-Rank hyperparameter choices}
\label{table:hyperparameters_PG}
\begin{tabular}{lllll}
\toprule
{} & $E$ &  $B$  & $l2$ & $MC$\\
\midrule
Synthetic &  5 & 1 & 0 & 10 \\
German &  100 & 1 & 0 & 25 \\
Microsoft &  68K & 10 & .01 & 32\\
\bottomrule
\end{tabular}
\end{table}

\end{appendix}

\bibliographystyle{plainnat}
\bibliography{YK, AB}